\theoremstyle{plain}
\newtheorem{thm}{\protect\theoremname}
\newenvironment{proof}[1][\protect\proofname]{\par
\normalfont\topsep6\p@\@plus6\p@\relax
\trivlist
\itemindent\parindent
\item[\hskip\labelsep\scshape #1]\ignorespaces
}{%
\endtrivlist\@endpefalse
}
\providecommand{\proofname}{Proof}
\theoremstyle{plain}
\newtheorem{definition}[thm]{\protect\definitionname}
\providecommand{\lemmaname}{Lemma}
\providecommand{\theoremname}{Theorem}
\providecommand{\definitionname}{Definition}
\providecommand{\examplename}{Example}
\begin{document}

\begin{frontmatter}{}

\title{Random Normed k-Means: A Paradigm-Shift in Clustering within Probabilistic Metric Spaces}

\author[ens]{Abderrafik Laakel Hemdanou\corref{cor1}}
\ead{abderrafik.laakelhemdanou@etu.uae.ac.ma}

\author[ens]{Youssef Achtoun}
\ead{achtoun44@outlook.fr}

\author[ens]{Mohammed Lamarti Sefian}
\ead{lamarti.mohammed.sefian@uae.ac.ma}

\author[ens]{Ismail Tahiri}
\ead{istahiri@uae.ac.ma}

\author[ensias]{Abdellatif El Afia}
\ead{abdellatif.elafia@ensias.um5.ac.ma}

\cortext[cor1]{Corresponding author.}

\address[ens]{
 Applied Mathematics and computer science team, Higher normal school, Martil, Morocco}
\address[ensias]{
 National Higher School for Computer Science and Systems Analysis, Mohammed V University in Rabat, Rabat, Morocco}

\begin{abstract}
Existing approaches remain largely constrained by traditional distance metrics, limiting their effectiveness in handling random data. In this work, we introduce the first k-means variant in the literature that operates within a probabilistic metric space, replacing conventional distance measures with a well-defined distance distribution function. This pioneering approach enables more flexible and robust clustering in both deterministic and random datasets, establishing a new foundation for clustering in stochastic environments. By adopting a probabilistic perspective, our method not only introduces a fresh paradigm but also establishes a rigorous theoretical framework that is expected to serve as a key reference for future clustering research involving random data. Extensive experiments on diverse real and synthetic datasets assess our model’s effectiveness using widely recognized evaluation metrics, including Silhouette, Davies-Bouldin, Calinski-Harabasz, the adjusted Rand index, and distortion. Comparative analyses against established methods such as k-means++, fuzzy c-means, and kernel probabilistic k-means demonstrate the superior performance of our proposed random normed k-means (RNKM) algorithm. Notably, RNKM exhibits a remarkable ability to identify nonlinearly separable structures, making it highly effective in complex clustering scenarios. These findings position RNKM as a groundbreaking advancement in clustering research, offering a powerful alternative to traditional techniques while addressing a long-standing gap in the literature. By bridging probabilistic metrics with clustering, this study provides a foundational reference for future developments and opens new avenues for advanced data analysis in dynamic, data-driven applications.

\end{abstract}
\begin{keyword}
Clustring, probabilistic metric space, k-means, random normed k-means.
\end{keyword}

\end{frontmatter}{}

\section{Introduction}
K-means stands out as one of the most widely employed algorithms in unsupervised machine learning, celebrated for its simplicity and effectiveness in clustering data points into cohesive groups \cite{b1,liu2011clustering}. As its popularity persists, numerous iterations of K-means have emerged, each striving to augment its performance across diverse datasets and applications \cite{b2, b3, b4}. This pursuit of optimizing K-means clustering has led to a rich tapestry of research journey, exploring dimensions such as dimensionality reduction \cite{b5,LaakelHemdanou2024ComparativeAO}, computational efficiency \cite{b6} and robustness to varying data distributions \cite{b7}. At the core of these efforts lies the investigation of metric spaces, where distances between data points dictate clustering decisions \cite{b8}. Traditional K-means algorithms operate within deterministic metric spaces, where distances are uniquely defined by fixed mathematical formulations \cite{b9}. However, real-world datasets often present challenges beyond deterministic metrics, with inherent complexities such as nonlinear structures and noisy observations \cite{b10}. In response, there is a call for a paradigm reorientation, one that embraces the inherent uncertainty and variability present in real-world data.
\\
\vskip 1mm

Motivated by this imperative, we introduce a novel variant of K-means clustering rooted in the framework of probabilistic metric spaces. Unlike its predecessors, our approach eschews the rigidity of deterministic metrics, opting instead for a flexible distance distribution function that encapsulates the probabilistic nature of distances between data points \cite{b11}. From the theory of probabilistic metric spaces, it is possible to construct a metric as a result of joining through an appropriate t-norm with metrics used in clustering methods. For example, these metrics are being used in different fields of science. Karakli{\'c} et al. \cite{b12} introduce an image filtering method by grouping a family of computationally efficient filters with strong detail preservation capabilities. They achieve this by combining probabilistic metrics, utilizing the UIQI (Universal Image Quality Index) quality index. Torra \cite{b13, b14, b15} proposes probabilistic metrics designed to quantify the distance between models generated from data. Additionally, Sato-Ilic presents a similar approach, employing kernel-based metrics for multidimensional scaling \cite{b16}. Finally, Pourmoslemi et al. \cite{Pour} propose a D2D multiple-metric routing protocol aimed at enhancing network stability and path efficiency in the aftermath of potential natural disasters, utilizing probabilistic normed spaces.\\
\vskip 1mm

By adopting a probabilistic lens, we transcend the confines of deterministic clustering, offering a versatile framework capable of accommodating both deterministic and random databases.
\vskip 1mm

In this paper, we embark on a journey to elucidate the intricacies of our proposed probabilistic metric space approach to K-means clustering \cite{b17}. We will compare our proposed method with three established K-means variants: K-means++, kernel probabilistic K-means and fuzzy C-means. Through a series of empirical evaluations across diverse datasets, we scrutinize the performance of our model, comparing it against these benchmarks and state of the art methodologies. Leveraging a battery of evaluation metrics including Silhouette \cite{lenssen2024medoid}, Davies Bouldin \cite{Ros2023PDBIAP}, Calinski-Harabasz \cite{Hu2024ComparisonOK}, the adjusted rand index \cite {Warrens2022UnderstandingTA} and distortion \cite{b1}. We seek to unravel the efficacy and robustness of our approach across varied clustering scenarios \cite{b9, b18}. Our findings not only underscore the potency of the probabilistic metric space paradigm in reshaping the landscape of data clustering but also herald new frontiers for exploration and innovation in unsupervised learning \cite{b19}. By bridging the gap between theory and practice, we aspire to empower practitioners with a versatile toolkit capable of unraveling the latent structures embedded within complex datasets \cite{b20}.\\

The paper is structured as follows. In Section 2, we review two closely related works. Section 3 introduces the proposed metric, which serves as the foundation for our clustering method based on probabilistic metric spaces, particularly emphasizing the adoption of random normed space. In Section 4, we present two methods: the first one, named spectral sampling for RNKM, focuses on centroid estimation, while the second one elaborates on our novel clustering approach. Additionally, we delve into the time complexity of our method and compare it with other approaches. Results and discussions are outlined in Section 4. Finally, Section 5 draws conclusions based on the findings presented in the preceding sections.
\vskip 1mm

\section{Related work}
Numerous clustering algorithms are documented in the literature, each designed to uncover patterns and achieve the most coherent representations from the provided data. In addition to discussing the classical methods, namely the K-means and K-means++ algorithms \cite{ikotun2023k,cohen2022towards}, this section will also provide an overview of two other methodologies closely aligned with our approach.

\subsection{K-Means and K-Means++}
K-means stands as a prominent unsupervised learning technique in data analysis and machine learning \cite{b26, b25, sinaga2020unsupervised}. Its primary goal lies in clustering datasets into distinct groups (clusters) where elements within each group exhibit higher similarity to one another compared to elements in different groups. The formation of clusters is achieved by computing the distance between input data points and cluster centroids. In the K-Means algorithm, multiple centroid centers are computed within the input data, all sharing the same k-value \cite{isaac2021fiber,saravanan2023modified}. This process facilitates the grouping of data points with values proximate to the centroid into a cluster. The Euclidean distance serves as the fundamental metric for measuring the distance between individual data points and cluster centroids \cite{bundak2022fuzzy,hamalainen2020improving}. When the distance between input data and a centroid yields the smallest value compared to other clusters\cite{hakkoymaz2009applying}, the data is assigned to that cluster.
The classic k-means objective function aims to minimize the within-cluster sum of squares (WCSS). This objective function can be expressed as follows:

\[
\mathcal{J}(C) = \sum_{i=1}^{l} \sum_{j=1}^{r(A)} \| c_i- x_j  \|^2,
\]

where:
\begin{itemize}
    \item $l$ is the number of clusters.
    \item $A$ is the set of points.
    \item $r(A)$ is the number of points.
    \item $c_i$ is the centroid of the $i$-th cluster, computed as $c_i=\frac{1}{n_i}\sum_{j=1}^{r(A)}x_j$ with $n_i$ is the number of data points assigned to cluster $i$.
\end{itemize}

The goal of the k-means algorithm is to find the set of centroids $C = \{c_1, c_2, ..., c_l\}$ that minimizes the object
\[{{\arg  min\ }}_j{\sum_{i=1}^{l} \sum_{j=1}^{r(A)} \| c_i- x_j  \|^2}.\] 

For k-means++ \cite{b3}, the optimization problem remains the same as in classical k-means. However, the pivotal disparity lies in the initialization step. In classical k-means, initial centroids are randomly chosen from the dataset, whereas in k-means++, the initialization process is more refined. It selects centroids based on their distances from existing centroids. This approach ensures a more even spread of initial centroids, often resulting in quicker convergence and enhanced clustering quality compared to classical k-means.

\subsection{Fuzzy C-Means}
Fuzzy C-means (FCM) stands out as a notable unsupervised learning approach widely utilized in data analysis and machine learning tasks. Similar to k-means, its fundamental objective centers on partitioning datasets into cohesive groups or clusters where elements within each cluster demonstrate higher similarity to one another compared to elements in different clusters. The clustering process in FCM involves iteratively assigning data points to clusters based on their degrees of membership, which reflect the level of similarity between data points and cluster centroids. Unlike the crisp assignments in k-means, FCM allows for soft assignments, enabling data points to belong to multiple clusters simultaneously with varying degrees of membership. The computation of cluster centroids in FCM is guided by the weighted average of all data points, with weights determined by the degree of membership. The optimization in FCM aims to minimize the weighted sum of squared errors \cite{vadaparthi2011segmentation,anand2013semi,el2023fp}, which quantifies the discrepancy between data points and cluster centroids, considering their membership degrees. This optimization process iteratively updates cluster centroids and membership degrees until convergence is achieved. The objective function for FCM can be expressed as follows:

$$ \mathcal{J}(U,C)=\sum_{i=1}^{r(A)}\sum_{j=1}^{l}u_{i,j}^{m}\left\|x_{i}-c_{j}\right\|^{2},$$
where:
\begin{itemize}
\item $u_{ij}$ represents the degree of membership of data point $i$ to cluster $j$, computed as 
$$ u_{i,j}=\left(\sum_{k=1}^{l}\left(\frac{\left\|x_i-c_j\right\|}{\left\|x_i-c_k\right\|}\right)^{\frac{2}{m-1}}\right)^{-1}.$$
\item $m$ is a weighting exponent controlling the degree of fuzziness (usually $m >1$, typically set to 2).
\item $x_i$ is the $i$-th data point.
\item $c_j$ is the centroid of cluster $j$, calculate as: 
$$ c_j=\frac{\sum_{i=1}^{n} u_{i,j}^{m}x_i}{\sum_{i=1}^{n}u_{i,j}^{m}}.$$
\end{itemize}

The objective of FCM is to find the optimal cluster centroids $C= {c_1, c_2, ..., c_l}$ and membership degrees $U$ that minimize the objective function $J(U,C)$. This optimization process involves iteratively updating the centroids and membership degrees until convergence is achieved.

\subsection{Kernel Probabilistic K-Means}
Kernel probabilistic k-means (KPKM) \cite{b17} emerges as a powerful extension of traditional k-means, offering enhanced flexibility and robustness in clustering tasks within the realm of machine learning and data analysis. Much like its predecessor, KPKM aims to partition datasets into cohesive clusters, but with a probabilistic framework that accommodates uncertainty inherent in real world data. The essence of KPKM lies in its utilization of kernel functions, enabling nonlinear transformations that can capture intricate structures within the data. By incorporating probabilistic principles, KPKM assigns data points to clusters not only based on their spatial proximity to cluster centroids but also considering the probability of belonging to each cluster. This probabilistic assignment allows for a more nuanced representation of data points, particularly beneficial when dealing with overlapping clusters or noisy data \cite{de2023k}. The optimization objective in KPKM involves finding the optimal cluster centroids and covariance matrices that maximize the likelihood of observing the data given the model parameters. This iterative optimization process iteratively refines the cluster representations until convergence\cite{klopotek2022k}, yielding robust and interpretable clustering solutions tailored to the complexities of the underlying data distribution. The objective function for KPKM is given as: 

$$ \mathcal{J}(U,C)=\sum_{i=1}^{r(A)}\sum_{j=1}^{l}u_{i,j}\left\|\phi(x_{i})-c_{j}\right\|^{2},$$
where : 
\begin{itemize}
	\item $u_{i,j}$ is the membership weight of data point $x_i$ in cluster $j$. It is typically calculated using the Gaussian Radial Basis Function (RBF) kernel and the softmax function to ensure that the memberships sum up to 1 for each data point by 
	$$ u_{i,j} = \frac{e^{-\frac{\|\phi(x_i)-c_j\|^2}{2\sigma^2}}}{\sum_{k=1}^{l}e^{-\frac{\|\phi(x_i)-c_k\|^2}{2\sigma^2}}},$$
	with $\sigma$ is the bandwidth parameter of the RBF kernel.
	\item $\phi(x_i)$ is the feature vector of data point $x_i$ mapped into a high-dimensional space using a chosen kernel function. In the case of the RBF kernel, $\phi(x_i)$ is often defined as the result of applying the kernel function to $x_i$, 
	$$\phi(x_i) = \text{RBF}(x_i) = \exp\left(-\frac{\|x_i - \mu\|^2}{2\sigma^2}\right),$$
with $\mu$ is the mean or center of the kernel function.
\item  $c_j$ is the centroid of cluster $j$, computed as:
$$ c_{j}=\frac{\sum_{i=1}^{r(A)}u_{i,j}\phi(x_{i})}{\sum_{j=1}^{r(A)}u_{i,j}}.$$ 
\end{itemize}

\section{Random Normed spaces and Its Construction}
In the subsequent sections, we delve deeper into the conceptual underpinnings of probabilistic metric spaces, elucidate the methodology underpinning our proposed approach, and present a comprehensive analysis of experimental results. We try to illuminate the path towards a more nuanced and adaptive framework for data clustering that embraces uncertainty \cite{hossain2019dynamic}, fosters innovation and unlocks the latent potential inherent in unsupervised learning paradigms. Our approach diverges from traditional clustering methods, opting instead for a k-means clustering strategy within the framework of probabilistic metric spaces. The origin of this concept can be traced back to K. Menger's work in 1942 \cite{b21}, where distribution functions were introduced as substitutes for non negative real numbers in metric values. This innovative approach accommodates scenarios where the precise distances between two points remain unknown \cite{deng2023query,gonzales2022distance}, with only the probabilities associated with potential distance values available.

To commence our exploration, it is crucial to define the basic concepts underlying probabilistic normed and metric spaces.
\subsection{Probabilistic metric spaces}
\begin{definition}
Let $\Lambda^{+}$ be the space of all distance distribution functions $\zeta:[0,+\infty] \rightarrow [0,1]$  such that:
\begin{enumerate}
	\item $\zeta$ is left continuous on $[0,+\infty)$,
	\item $\zeta$ is non-decreasing,
	\item $\zeta(0)=0$ and  $\zeta(+\infty)=1$.
\end{enumerate}
We indicate by $\triangle^{+} \subset \Lambda^{+}$ the set $ \triangle^{+}=\left\{ \zeta \in \Lambda^{+} : \lim\limits_{x\to +\infty} \zeta(x) = 1 \right\} $.
\end{definition}
One of the element of $\triangle^{+}$ is $\epsilon_{a}$ which defined as
 $$\epsilon_{a}(x)=\left\{
\begin{array}
[c]{lll}%
0 & \ \ \ \ \ \mbox{if} &   x \in [0,a],\\
1 & \ \ \ \ \ \mbox{if} & x \in (a,+\infty].
\end{array}
\right.  $$

We should note that for each $\zeta,\zeta' \in \Lambda^{+}$ we have $ \zeta \leq \zeta'$ whenever $\zeta(t) \leq \zeta'(t)$ for all $t \geq 0$.

\begin{definition} \cite{b22}
 A binary operation $\pi$ on $\Lambda^{+}$ is called a triangle function if for all $\zeta,\zeta',\zeta" \in \Lambda^{+}$ the conditions listed below are verified:
\begin{enumerate}
	\item $\pi(\zeta,\zeta')=\pi(\zeta',\zeta)$,
	\item $ \pi(\zeta,\pi(\zeta',\zeta"))=\pi(\pi(\zeta,\zeta'),\zeta")$,
	\item  $ \zeta\leq \zeta" \Longrightarrow \pi(\zeta,\zeta') \leq \pi(\zeta",\zeta') $,
	\item  $ \pi(\zeta,\epsilon_{0}) = \pi(\epsilon_{0},\zeta) = \zeta $.
\end{enumerate}
\end{definition}
A mapping $\mathfrak{H}: \left[0,1\right] \times \left[0,1\right] \rightarrow \left[0,1\right]$ is referred to as a t-norm if and only if it has the properties of commutativity, associativity, non-decreasing at each location and has 1 as neutral element.\\
Some of the fundamental t-norms are
 $$\mathfrak{H}_{M}(x,y) = Min(x,y),\ \ \ \mathfrak{H}_{P}(x,y) = x \cdot y \ \ \ and \ \ \mathfrak{H}_{L}(x,y) = Max ( x+y-1, 0 ).$$
A special class of triangle functions will be presented by assuming the form: for any $f,g \in \Lambda^{+}$ and for all $ t \in [0,\infty)$,
$$ \pi_{\mathfrak{H}}(f,g)(t)=\sup\left\{\mathfrak{H}\left(f(s),g(u)\right): \ s+u=t\right\}, $$
with $\mathfrak{H}$ is a left-continuous t-norm. Additionally, if $\mathfrak{H}$ is continuous, $\pi_{\mathfrak{H}}$ must likewise be continuous.

	\begin{definition}\label{def2}
 The triplet $(\Omega,\digamma,\pi)$ is told a probabilistic metric space where $\Omega$ is nonempty set, $\digamma$ is a function from $ \Omega \times \Omega$ into $ \Lambda^{+}$ and $\pi$ is a triangle function if for all $ p,q,r \in \Omega$ and $x,y> 0$, the conditions listed below are verified:
\begin{enumerate}
	\item $ \digamma_{p,q}= \epsilon_{0}$ if and only if $p = q $, \label{item:second2}
	\item $ \digamma_{p,q}=\digamma_{q,p}$,                              \label{item:third2}
	\item $ \digamma_{p,q}\geq \pi(\digamma_{p,r},\digamma_{r,q})$.  \label{item:fourth2}
\end{enumerate}
When $\pi = \pi_{\mathfrak{H}}$ for a given t-norm $\mathfrak{H}$, then the space $(\Omega,\digamma,\pi_{\mathfrak{H}})$ is referred to as Menger space and we write $(\Omega,\digamma,\mathfrak{H})$ which implies that the condition \ref{item:fourth2} equivalent to
\begin{equation}
\label{eq2}
\digamma_{p,r}(x+y) \geq \mathfrak{H}(\digamma_{p,q}(x),\digamma_{q,r}(y)).
\end{equation}
\end{definition}

\subsection{Random normed spaces}

Expanding on Menger's groundwork, Šerstnev extended the concept to probabilistic normed spaces \cite{b23}, providing a comprehensive framework that considers the uncertainty inherent in distance measurements. For a detailed exploration of the features of probabilistic normed spaces, readers can refer to \cite{b22}. In our study, we leverage the principles of probabilistic normed spaces to establish a foundation for employing k-means clustering within this novel context.

\begin{definition}
\label{def1}
Let $\Omega$ be a real vector space, $\Gamma$ is a function from $\Omega$ into $ \Lambda^{+}$, $\pi$ is a continuous triangle function and the following conditions are satisfied for all $ p,q \in \Omega$, $x >0$ and $\alpha \neq 0$,
\begin{enumerate}
	\item $ \Gamma_{p}= \epsilon_{0}$ if and only if   $p = 0$ (the null vector), \label{item:first}
	\item $ \Gamma_{\alpha p}(x)=\Gamma_{p}(\frac{x}{\left|\alpha \right|})$, \label{item:second}
	\item $ \Gamma_{p+q}\geq \pi(\Gamma_{p,r},\Gamma_{r,q}) $. \label{item:third}
\end{enumerate}
If $\pi = \pi_{\mathfrak{H}}$ for some t-norm $\mathfrak{H}$, then the triple $(\Omega,\Gamma,\pi_{\mathfrak{H}})$ is called a random normed space, which implies that the condition \ref{item:third} become 
\begin{equation}\label{eq1}
\Gamma_{p+q}(x+y) \geq \mathfrak{H}(\Gamma_{p}(x),\Gamma_{q}(y)) \ \ for \ all \ x,y >0.
\end{equation}
\end{definition}

\begin{thm} \label{exem1}
Let $(\Omega,\left\|\cdot \right\|)$ be a real normed space and define $\Gamma: \Omega \times \Omega \rightarrow \Lambda^{+}$ by 
$$\Gamma_{p}(t)=\left\{
\begin{array}
[c]{lllll}%
\frac{t}{t+\left\|p\right\|} & \ \ \ \ \ \mbox{if} &   t>0,\\
0 & \ \ \ \ \ \mbox{if} & t=0.\end{array} \right. $$
Then, $(\Omega,\Gamma,\mathfrak{H}_{P})$ is a random normed space. 
\end{thm}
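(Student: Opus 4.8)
The plan is to verify the three conditions of Definition~\ref{def1} for the pair $(\Gamma,\mathfrak{H}_P)$, after first confirming that each $\Gamma_p$ indeed belongs to $\Lambda^+$. Since the product t-norm is continuous, $\pi_{\mathfrak{H}_P}$ is a bona fide continuous triangle function, so the framework of Definition~\ref{def1} applies and the third condition takes the Menger form~\eqref{eq1}, i.e.\ $\Gamma_{p+q}(x+y)\ge \Gamma_p(x)\,\Gamma_q(y)$ for all $x,y>0$. I would dispose of the membership check and of conditions~\ref{item:first} and~\ref{item:second} rapidly, and concentrate the real work on this inequality.

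Writing $a=\|p\|$, for $a>0$ the map $t\mapsto t/(t+a)$ is continuous and non-decreasing on $[0,\infty)$ (its derivative $a/(t+a)^2$ being non-negative), with $\Gamma_p(0)=0$ and $\Gamma_p(+\infty)=1$, so $\Gamma_p\in\triangle^+\subset\Lambda^+$; when $a=0$ one has $\Gamma_0=\epsilon_0$, which also lies in $\triangle^+$. This last remark already settles one direction of condition~\ref{item:first}, since $p=0$ gives $\Gamma_0=\epsilon_0$; conversely, $\Gamma_p=\epsilon_0$ forces $t/(t+a)=1$ for $t>0$ and hence $a=\|p\|=0$, i.e.\ $p=0$. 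Condition~\ref{item:second} is a one-line computation: using $\|\alpha p\|=|\alpha|\,\|p\|$,
\[
\Gamma_{\alpha p}(x)=\frac{x}{x+|\alpha|\,\|p\|}=\frac{x/|\alpha|}{x/|\alpha|+\|p\|}=\Gamma_p\!\left(\frac{x}{|\alpha|}\right).
\]

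The crux is the Menger inequality~\eqref{eq1}, and here my plan is a two-stage reduction. Put $a=\|p\|$ and $b=\|q\|$. Since $t\mapsto t/(t+s)$ decreases in the parameter $s\ge 0$, the norm triangle inequality $\|p+q\|\le a+b$ yields
\[
\Gamma_{p+q}(x+y)=\frac{x+y}{x+y+\|p+q\|}\ \ge\ \frac{x+y}{x+y+a+b},
\]
so it suffices to prove the scalar inequality $\dfrac{x+y}{x+y+a+b}\ge \dfrac{x}{x+a}\cdot\dfrac{y}{y+b}$. I would clear the strictly positive denominators and expand; the difference of the two sides then collapses to $x^2 b + y^2 a + ab(x+y)$, which is non-negative because $a,b,x,y\ge 0$. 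Chaining the two steps gives $\Gamma_{p+q}(x+y)\ge \Gamma_p(x)\,\Gamma_q(y)=\mathfrak{H}_P(\Gamma_p(x),\Gamma_q(y))$, establishing condition~\ref{item:third} and completing the proof that $(\Omega,\Gamma,\mathfrak{H}_P)$ is a random normed space. The only subtlety worth flagging is that this reduction hinges on the monotonicity of $t/(t+s)$ in $s$, which is precisely what lets the norm triangle inequality be transferred cleanly into the distribution-function inequality; without it, one would be forced to bound $\|p+q\|$ against $a+b$ inside a less tractable expression.
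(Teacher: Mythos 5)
Your proof is correct and takes essentially the same route as the paper's: both arguments reduce the Menger inequality $\Gamma_{p+q}(x+y)\ge\Gamma_p(x)\Gamma_q(y)$ to the scalar bound $\frac{x}{x+a}\cdot\frac{y}{y+b}\le\frac{x+y}{x+y+a+b}$ combined with the norm triangle inequality $\|p+q\|\le a+b$ --- the paper merely verifies the scalar bound by a monotonicity chain (enlarging $t,t'$ to $t+t'$ and then using $(1+u)(1+v)\ge 1+u+v$) and applies the triangle inequality last, where you expand after clearing denominators (your remainder $x^2b+y^2a+ab(x+y)\ge 0$ is correct) and apply it first. Your explicit checks that each $\Gamma_p\in\triangle^+$, both directions of condition~\ref{item:first}, and the homogeneity computation for condition~\ref{item:second} are sound and somewhat more thorough than the paper, which dismisses these as trivial.
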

\begin{proof}
In fact, we have to show the probabilistic triangle inequality, since that \ref{item:first} and \ref{item:second} of definition \ref{def1} are trivially satisfied.\\
\begin{align*}
\pi_{\mathfrak{H}_{P}}(\Gamma_{p}(t),\Gamma_{q}(t')) &=\frac{t}{t+\left\|p\right\|} \cdot \frac{t'}{t'+\left\|q \right\|} \\
&= \frac{1}{1+\frac{\left\|p\right\|}{t}} \cdot \frac{1}{1+\frac{\left\|q\right\|}{t'}} \\
& \leq \frac{1}{1+\frac{\left\|p\right\|}{t+t'}} \cdot \frac{1}{1+\frac{\left\|q\right\|}{t+t'}} \\
& \leq \frac{1}{1+\frac{\left\|p \right\|+ \left\|q \right\|}{t+t'}}\\
& \leq \frac{1}{1+\frac{\left\|p+q\right\|}{t+t'}} \\
& = \frac{t+t'}{t+t'+\left\|p+q\right\|} \\
& = \Gamma_{p+q}(t+t').
\end{align*}
Therefore, $(\Omega,\Gamma,\mathfrak{H}_{P})$ is a random normed space. Also, we can show easily that $(\Omega,\Gamma,\pi_{\mathfrak{H}_{M}})$ is random normed space.
\end{proof}
 
The function $\Gamma_{p-q}(t)$ can be understood as the measure of closeness between $p$ and $q$ concerning the parameter $t$, referred to as the sensitivity of the random normed space. When a random normed function $\Gamma$ on $\Omega$ remains unchanged with respect to $t$, it is termed stationary. In other words, for every $p, q \in \Omega$, the function $\Gamma_{p-q}(t) = \Gamma_{p,q}(t)$ remains constant. Such a metric suffices for establishing a topology $\mathcal{N}$ using the basis of neighbors defined by
$$ \mathcal{N}= \left\{ N_{p}(\epsilon,\lambda) : \ p \in X, \ \epsilon > 0 \ \ and \ \ \lambda > 0 \right\}, $$
where
$$ \mathcal{N}_{p}(\epsilon,\lambda) = \left\{ q \in X : \ \Gamma_{p-q}(\epsilon) > 1-\lambda \right\}. $$

This topology enables the establishment of a proximity relation, thereby imparting an order to the check the similarity between points\cite{gates2017impact}. Hence, $\Gamma_{p-q}(t) = 1$ implies that $p$ is similar to $q$ (indicating high similarity or proximity) according to the metric $\Gamma$.
\vskip 1mm

In this study, we adopt the random normed spaces approach for clustering using the k-means method, although classical metrics could have been employed for this purpose. When simple clustring or others approaches are represented by deterministic variables, we will utilize the above metric in Theorem \ref{exem1}, which serve as a reference to illustrate our methodology.

\section{Proposed Method}

\subsection{Random Normed K-means }

Although k-means is widely used and effective in many situations, it also has certain weaknesses and limitations, among which we have tried to address is inability to deal with clusters of varying sizes, given that k-means assigns each point to a cluster according to Euclidean distance, this can cause problems when clusters have significantly different sizes, and sensitivity to the scale of the data, as k-means results can vary according to the scale of the data, since it is based on Euclidean distance. Data normalization is often necessary to obtain more accurate results, but what draws attention is that the Euclidean distance remains the essential cause for these two limitations, plus there are other limitations such as dependence on the initial value of $k$ and lacks adaptability to complex cluster shapes.\\

The methodology introduced in our study presents novel techniques for clustering by advocating the utilization of random normed spaces instead of traditional metric spaces. These methodologies offer a fresh perspective on clustering methods applicable across various domains such as data analysis, pattern recognition, and machine learning. In our proposed approach, random normed spaces serve as the foundation for clustering algorithms \cite{esmaeili2020probabilistic}, offering distinct advantages over conventional metrics by introducing randomness into the norm structure \cite{wu2011distance}. This departure from metric spaces allows for greater flexibility and adaptability in capturing complex relationships within datasets, especially in scenarios where traditional metrics may fall short. Moreover, the integration of random normed spaces facilitates the exploration of non-linear and multidimensional data structures, enabling more nuanced insights and improved clustering performance \cite{hsu2015neural}. Our methodological framework emphasizes the importance of understanding the underlying characteristics of data distributions and leveraging random normed spaces to enhance clustering accuracy and robustness. In the subsequent sections, we elucidate the computational procedures and theoretical foundations of our proposed methodologies, highlighting their potential applications and implications across diverse fields of research and practice.\\
After studying the clustering method, the subsequent focus lies in its practical implementation. In this following, we will examine the iterative calculation method and the necessary statistical properties that determine when the iterative process should cease. Let $(\Omega,\Gamma,\mathfrak{H}_{P})$ be random normed space, given a subset $A \subset \Omega$, the average within-cluster sum of random normed $\Gamma_{i,j}$ for elements $i, j \in A$ is given as follows: 
$$\bar{A}=\left\{
\begin{array}
[c]{lll}%
\frac{1}{r(A)}\sum_{i,j \in A; i<j}\Gamma_{i-j} & \ \ \ \ \ \mbox{if} \ r(A) \geq 2,\\
0 & \ \ \ \ \ \mbox{otherwise},
\end{array}
\right.  $$
where $r(A)$ is the number of elements in $A$. For any fixed $l$ with $1 \leq l \leq n$, $C=\left(c_{1},c_{2},...,c_{l}\right)$ is called l-partition of $\Omega$ if each $c_{i}$ is non-empty cluster of $\Omega$, $c_{i}\cap c_{j}= \emptyset$ with $i \neq j$ and $\bigcup_{i=1}^{l}c_{i}=\Omega$.\\
Let $\mathcal{C}_{l}$ be the set of all l-partition of $\Omega$. For each $C=\left(c_{1},c_{2},...,c_{l}\right) \in \mathcal{C}_{l}$, the total of the average within-cluster sum for $C$ is given by
\begin{equation}
\label{OF}
\mathcal{J}(C)=\frac{1}{r(A)}\sum_{i=1}^{l} \sum_{\substack{i,j \in A \\ i<j}}\Gamma_{i-j} \quad \text{for} \quad r(A) \geq 2.
\end{equation}

Then, given a random normed function $\Gamma$ and the number of clusters $l$, the problem is to maximize $\mathcal{J}(C)$ over $C \in \mathcal{C}_{l}$.
More specifically, given the random normed spaces shown in Theorem \ref{exem1}, then for all $t>0$, the objective function will be given as :
\[\mathcal{J}'(C)={argmax}_{C\in C_l}\sum_{i=1}^{l} \sum_{i,j \in A; i<j}\ \frac{t}{t+{\left\|c_i-x_j\right\|}_2}.\]

Before delving into the main algorithm of our proposed clustering method, we have chosen to introduce an alternative technique for determining the optimal centroid positions within the random normed k-means algorithm, as opposed to the classical method and for overcome the problem of $k$.
\vskip 1mm

\subsubsection{Spectral Sampling for RNKM Centroid Estimation}

The algorithm entitled ''Spectral Sampling for RNKM Centroid Estimation'' represents a clustering strategy that incorporates spectral analysis techniques to determine optimal centroid positions within the k-means algorithm \cite{ren2020improved}. The process begins by determining the appropriate number of $k$ centroids \cite{fahim2021k}, using a modified version of the elbow method. Subsequently, a similarity matrix $W$ is computed from the input data $A$, where each element $W_{i,j}$ is evaluated according to the proximity between the data points $x_i$ and the nearest centroid $c_i$ at iteration $t$.\\
\vskip 1mm

After obtaining the similarity matrix, the algorithm proceeds to calculate the normalized Laplacian matrix $L$, which is a transformation of $W$ that highlights the structure of the data in terms of connectivity between points. This step is essential for the rest of the process, as it paves the way for the extraction of the spectral characteristics of the data. The first k-eigenvectors of the Laplacian matrix $L$ are then calculated. These eigenvectors represent the dimensions in which the data are most extensive and are therefore used to form a new matrix $U$. Each row of $U$ is normalized to ensure that distances between points are preserved in the new feature space. Finally, the algorithm applies k-means clustering to the rows of the normalized $U$ matrix to estimate the positions of the $C$ centroids \cite{song2021weighted}. This approach captures the intrinsic structure of the data and produces more coherent and important clusters, particularly when the data feature nested clusters or non-spherical shapes \cite{tang2007enhancing}.
this algorithm is shown below.
\begin{algorithm}[H]
\caption{Spectral Sampling for RNKM Centroid Estimation}
\begin{algorithmic}[1]
\Require Input data $A$, Determine Number of centroids $K$ using Elbow Method Modified
\Ensure Centroids $C$

\State Compute the similarity matrix W from the data A.
\State $W_{ij} = \Gamma_{i-j}$

\State Compute the normalized Laplacian matrix L using W.
\State $L = I - D^{-1/2}WD^{-1/2}$  

\State Compute the first K eigenvectors V of L. 
\State $L = V\Lambda V^T$

\State Form the matrix U by concatenating the first K eigenvectors of V.
\State $U = [v_1, v_2, \ldots, v_K]$

\State Normalize each row of U to obtain the normalized matrix U.

\State Apply k-means clustering on the rows of U to estimate the centroids C.

\State $U_{ij} = \frac{U_{ij}}{\sqrt{\sum_{k} U_{ik}^2}}$

\Return $C$
\end{algorithmic}
\end{algorithm}

\subsubsection{Algorithm for RNKM}
The ''RNKM Algorithm'' represents our proposed method, serving as a clustering technique aimed at optimizing the allocation of data points to clusters through an iterative process. Commencing with a dataset $A$ and a predetermined number of clusters $m$, the algorithm initiates by determining the initial centroids $C$ of the clusters utilizing a spectral sampling algorithm. This approach facilitates more precise initialization and potentially accelerates convergence towards an optimal solution. Subsequently, the algorithm enters an iterative loop where each data point $x_i$ is assigned to the cluster whose centroid $c_i$ is closest, based on a weighted distance measure that considers a parameter $t$. This step is pivotal as it delineates how data points are grouped based on their proximity to the current centroids. Following the assignment of points to clusters, centroids are updated by averaging the data points within each cluster. This iterative process continues until the centroids' positions exhibit minimal change, indicating algorithm convergence. Additionally, the algorithm conducts k-means clustering across a range of $t$ values, exploring diverse distance scales to identify the configuration yielding optimal clustering results. Evaluation metrics, such as the sum of squares of intra-cluster distances or other indicators of clustering quality, inform the determination of the best clustering outcome. In essence, our proposed method enhances upon classic k-means by initializing centroids based on spectral analysis and dynamically adjusting data point assignment according to an adaptable distance parameter. These refinements lead to the formation of more coherent clusters and have the potential to enhance clustering performance, particularly on complex datasets.
The algorithm's implementation is provided below.

\begin{algorithm}[H]
\caption{RNKM Algorithm}
\begin{algorithmic}[1]

\State \textbf{Input:} Data $A$, choose number of clusters $l$ using Elbow Method. 
\State \textbf{Output:} Cluster centers $\{c_1, c_2, \ldots, c_l\}$

\State Initialize cluster centroids C using spectral sampling algorithm
\Function{RNKM Single $t$ }{$A, l, \text{max\_iters}, t$}
    \State \textbf{Repeat until convergence:}
    \For{$i = 1$ to $r(A)$}
      \For{$t > 0$}
        \State Assign $x_i$ to cluster $j$ such that 
        $\arg\max\limits_{l}\frac{t}{t+{\left\|c_i-x_j\right\|}_2}$ 
      \EndFor
      \For{$i = 1$ to $l$}
        \State Update cluster centroid $c_{i+1} = \frac{1}{|C_i|} \sum_{x_j \in C_i} x_j$
      \EndFor
    \EndFor
    \State \textbf{Return} clusters, centroids
\EndFunction

\State Perform RNKM Single $t$ for a range of $t$ values 

\State Return clusters, centroids and $t$ with best score.
  
\end{algorithmic}
\end{algorithm}

\begin{thm}
Let $(\Omega,\Gamma,\mathfrak{H}_{P})$ be a random normed space as defined in Example \ref{exem1}. Then, each iteration of the RNKM algorithm ensures a monotonic increase in the objective function, as defined in Equation (\ref{OF}), thereby guaranteeing convergence of the algorithm.
\end{thm}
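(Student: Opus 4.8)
The plan is to follow the template of the classical Lloyd convergence proof: realize $\mathcal{J}(C)$ as a quantity confined to a bounded interval, show that one iteration can only increase it, and then use finiteness of the set of admissible partitions to force the sequence of objective values to stabilize after finitely many steps.

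The boundedness and finiteness are the routine ingredients. By Theorem \ref{exem1}, for each fixed $t>0$ every term $\Gamma_{i-j}(t)=\frac{t}{t+\|i-j\|}$ lies in $[0,1]$, so the sum in (\ref{OF}) has at most $\binom{r(A)}{2}$ nonzero contributions and $0\le \mathcal{J}(C)\le \frac{1}{r(A)}\binom{r(A)}{2}$ for every $C\in\mathcal{C}_l$. Since $A$ is finite, $\mathcal{C}_l$ is a finite set, so any nondecreasing sequence $\mathcal{J}(C^{(0)})\le\mathcal{J}(C^{(1)})\le\cdots$ takes finitely many values and must become constant; once it is constant the partition no longer changes, which is the asserted convergence.

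The real content is the monotonicity step, and here I would first isolate where the change in $\mathcal{J}$ can come from. The objective in (\ref{OF}) is a within-cluster sum of pairwise similarities and therefore depends on the iterate \emph{only through the partition}, not through the centroids. Consequently the centroid-update phase $\hat c_k=\frac{1}{n_k}\sum_{x\in c_k}x$ leaves $\mathcal{J}$ unchanged, and the entire monotonicity claim reduces to showing that the assignment phase --- reassigning each $x_i$ to the cluster maximizing $\frac{t}{t+\|\hat c_j-x_i\|}$ --- does not decrease the pairwise $\Gamma$-sum. To connect the centroid-based reassignment to the pairwise objective I would try to import the bias--variance identity $\sum_{\{i,j\}\subset c_k}\|x_i-x_j\|^2=n_k\sum_{i\in c_k}\|x_i-\hat c_k\|^2$ (with $n_k=|c_k|$), which in ordinary $k$-means is exactly what lets nearest-centroid reassignment be read as a decrease of pairwise scatter.

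The hard part will be that this importation does not go through cleanly. The similarity $\frac{t}{t+\|\cdot\|}$ is a strictly decreasing, nonlinear function of the distance rather than a squared-distance surrogate, so it admits no additive bias--variance identity: the sum of $\Gamma$ over pairs is not a monotone function of the sum of squared pairwise distances, and the arithmetic mean that the update computes minimizes squared deviations rather than maximizing $\sum_{x\in c_k}\frac{t}{t+\|x-\hat c_k\|}$. A further subtlety is the normalization in (\ref{OF}): read as a single global factor $\frac{1}{r(A)}$ the objective is an \emph{unweighted} pairwise sum, whose implicit per-cluster weights $n_k$ shift under reassignment, whereas read as a ``total of per-cluster averages'' it carries $\frac{1}{n_k}$ weights that align far better with Lloyd's centroid scatter. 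I would therefore expect the decisive step to be a comparison inequality --- proving directly that a nearest-centroid reassignment weakly increases the appropriately normalized pairwise $\Gamma$-sum --- or, should no such inequality hold in general, identifying the centroid-based surrogate functional that the iteration genuinely ascends and bounding $\mathcal{J}$ against it. Closing this gap, rather than establishing boundedness or finiteness, is where the proof will stand or fall.
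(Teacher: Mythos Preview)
Your route diverges from the paper's in one essential way: the paper never works with the pairwise formulation~(\ref{OF}) at all. In the proof it silently replaces $\mathcal{J}$ by the point-to-centroid surrogate
\[
\mathcal{J}(C_{1},\dots,C_{l})\;=\;\max_{c_{1},\dots,c_{l}\in\Omega}\ \sum_{i=1}^{l}\sum_{x\in C_{i}}\frac{t}{t+\|c_{i}-x\|_{2}},
\]
i.e.\ the functional $\mathcal{J}'$ introduced immediately after~(\ref{OF}). With that definition the argument is the textbook two-line Lloyd chain: because $\mathcal{J}(C^{(t)})$ is a maximum over centroids it dominates the value at the old centroids $c^{(t-1)}$; and because the assignment step builds $C^{(t)}$ by pointwise $\arg\max_{i}\tfrac{t}{t+\|c_{i}^{(t-1)}-x\|}$, that value in turn dominates the score of $C^{(t-1)}$ at $c^{(t-1)}$, which is declared to equal $\mathcal{J}(C^{(t-1)})$. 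No bias--variance identity, no pairwise-to-centroid comparison, is ever invoked.

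What this buys the paper is brevity; what your approach buys is honesty about the statement as written. You are right that~(\ref{OF}) is a within-cluster \emph{pairwise} similarity sum, that the centroid update is irrelevant to it, and that no additive decomposition links $\sum_{i<j}\tfrac{t}{t+\|x_i-x_j\|}$ to nearest-centroid scatter---so monotonicity of the reassignment step for the literal~(\ref{OF}) is a genuine open issue, not a routine calculation. Your final worry is also on target: the last step of the paper's chain needs $c_{i}^{(t-1)}$ to \emph{attain} the maximum defining $\mathcal{J}(C^{(t-1)})$, yet Algorithm~2 updates centroids by the arithmetic mean, which does not in general maximize $\sum_{x\in C_i}\tfrac{t}{t+\|c-x\|}$. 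So the very gap you isolate---the mismatch between the mean-based update and the nonlinear similarity---is glossed over rather than resolved in the paper's own argument; you simply encountered it one objective earlier.
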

\begin{proof}
To simplify the notation, let us use the shorthand $\mathcal{J}(C_{1}^{(t)},...,C_{k}^{(t)})$ for the
RNKM objective, namely,
\begin{equation}
  \mathcal{J}(C_{1}^{(t)},...,C_{k}^{(t)}) \geq \max_{c_{1},...,c_{l} \in \Omega}\sum_{i=1}^{l} \sum_{x \in C_{i}}\ \frac{t}{t+{\left\|c_i-x\right\|}_2}.
	\end{equation}

Let $\left(C_{1}^{(t-1)},...,C_{k}^{(t-1)}\right)$ be the previous partition and $\left(C_{1}^{(t)},...,C_{k}^{(t)}\right)$ be the new partition assigned at iteration $t$. Using the definition of the objective, we clearly have that
\begin{equation}
\label{eqproof1}
 \mathcal{J}(C_{1}^{(t)},...,C_{k}^{(t)}) \geq \sum_{i=1}^{l} \sum_{x \in C_{i}^{(t)} }\ \frac{t}{t+{\left\|c_i^{(t-1)}-x\right\|}_2}.
\end{equation}
In addition, the definition of the new partition $\left(C_{1}^{(t)},...,C_{k}^{(t)}\right)$ implies that it
maximizes the expression 
$$\sum_{i=1}^{l} \sum_{x \in C_{i}}\ \frac{t}{t+{\left\|c_i^{(t-1)}-x\right\|}_2},$$
 over all possible partitions $\left(C_{1}^{(t)},...,C_{k}^{(t)}\right)$. Therefore, 
\begin{equation}
\label{eqproof2}
\sum_{i=1}^{l} \sum_{x \in C_{i}^{t-1}}\ \frac{t}{t+{\left\|c_i^{(t)}-x\right\|}_2} \geq \sum_{i=1}^{l} \sum_{x \in C_{i}^{(t-1)}}\ \frac{t}{t+{\left\|c_i^{(t-1)}-x\right\|}_2}.
\end{equation}
We have that the right-hand side of Equation (\ref{eqproof2}) equals $\mathcal{J}(C_{1}^{(t-1)},...,C_{k}^{(t-1)})$ .Combining this with Equation (\ref{eqproof1}) we obtain that
$$ \mathcal{J}(C_{1}^{(t)},...,C_{k}^{(t)}) \geq \mathcal{J}(C_{1}^{(t-1)},...,C_{k}^{(t-1)}).$$
Which concludes our proof.
\end{proof}

While the preceding theorem tells us that the RNKM objective is monotonically increasing, there is no guarantee on the number of iterations the RNKM algorithm needs to reach convergence. Furthermore, there is no nontrivial upper bound on the gap between the value of the RNKM objective at the algorithm's output and the maximum possible value of that objective function. In fact, RNKM might converge to a point that is not even a local maximum. To improve the results of k-means, it is often recommended to repeat the procedure several times with different randomly chosen initial centroids (e.g., selecting initial centroids as random points from the data).

\subsection{Time Complexity}
Let $n$ be the number of $d$-dimensional vectors, $k$ the number of clusters in the data set and $i$ the number of iterations required for convergence. To calculate and demonstrate the time complexity of the RNKM algorithm, we first need to determine the complexity of each step and then deduce the total time complexity of this algorithm. For the first step, we choose l clusters using the Elbow method, and the determination of l using the Elbow method involves running k-means for different values of l and calculating the sum of squares within the clusters. If k different values are tried, and each k-means iteration requires i iterations, the complexity can be estimated by $O(k.i.n.d)$.
we can confirm that k is a small constant with respect to n, 

for the second step concerns the initialization of the C cluster centroids using the modified spectral sampling algorithm, since the similarity calculated using the $gamma$ function depends on t, considering that t varies from 1 to T so the complexity of the similarity matrix calculation is given by $O(T. n^2. d)$, to compute the normalized Laplacian matrix L using W is given by $O(T.n^3)$, the computation of the first K eigenvectors V of L is given by $O(T.n^2.k)$, forming the matrix U by concatenating the first K eigenvectors of V and normalizing each row of U is given by $O(T.nk)$, applying k-means clustering on the rows of U to estimate the centroids C is estimated by $O(T.i.nk^2)$. 

 the overall complexity of this algorithm is $O(T.n^3)$, for steps 3 to 7 concerning RNkmeans Single t , the constant t has no influence on the complexity , the complexity is estimated by $O(i.n.l.d)$ , for step 8 concerning execution of RNKmeans Single t on a range of t values , let's consider this range from 1 to T , so the complexity is $O(T.i. n.k.d)$ , the last step is to return clusters , centroids , t with best results , its complexity simply is $O(T)$, The overall complexity of the RNKM algorithm is dominated by spectral sampling initialization and repeated RNKM Single t calculations so we can deduce that the total complexity is $O(T.n^3) + O(T.i.n.k.d)$ hence $O(T.n^3 + T.i.n.k.d)$.

The time complexity \cite{b24, b26} of our proposed algorithm is compared with other algorithms in Table 1.\\

\begin{table}[ht]
\centering
\begin{tabular}{|c|c|}
\hline
Algorithm & Complexity \\
\hline
K-Means & $O(nkd)$ \\
K-Means++ & $O(n^2k^2d)$ \\
Kernel Probabilistic K-Means & $O(nkdi)$ \\
Fuzzy C-means & $O(nkdi)$ \\
Random normed K-Means & $O(T.n^3 + T.i.n.k.d)$ \\
\hline
\end{tabular}
\caption{The complexity of K-means, its variants and RNKM.}
\label{tab:complexity}
\end{table}

This variant has a complexity which is a linear combination of the terms $O(T.i.n.k.d)$ and $(T.n^3)$. The $O(T.i.n.k.d)$ term is similar to that of standard K-Means, but multiplied by the number of draws $t$. The term $(T.n^3)$ is due to the calculation of the norm, which is cubic with respect to the dimensionality of the data.
Comparing the complexity of RNKM with the others, we can see that:
\begin{itemize}
	\item It is potentially more expensive than standard K-Means due to the additional term  $(T.n^3)$.
	\item It could be cheaper than K-Means++ if the number of draws is much smaller than $n$ and $k$.
	\item It is comparable to KPKM and FCM if the number of iterations $i$ is similar to the number of draws $t$ , but with an additional cost due to the $(T.n^3)$ term.
\end{itemize}
\vskip 1mm
\section{Material and environment}
For our research, we used an ASUS ROG Strix computer, equipped with an Intel(R) Core(TM) i7-10870H processor with a base frequency of 2.20GHz, expandable to 2.21 GHz, supported by 16 GB RAM and an NVIDIA RTX 2060 graphics card. This hardware configuration was chosen for its ability to handle intensive computing loads, typical of data science modeling and simulation work.
we used the Anaconda environment with Python, without integrating any additional packages. This minimalist approach was adopted to build our model and to eliminate external features that could influence model performance.
In contrast, for the alternative models we studied, additional packages provided by Anaconda were used. These models benefited from the contribution of specialized libraries, and the incorporation of these packages made it possible to exploit performance optimizations and reduce development time.
This methodology enabled us to carry out a comparative analysis between our model, running in a pure Python environment, and other models that take advantage of advanced libraries. The results highlighted the effectiveness of our simplified model in terms of performance.
\vskip 1mm

\section{Result and discussion}
In this study, we investigated the efficacy of our model across various domains by applying it to 21 distinct datasets. These datasets were carefully chosen to encompass a wide spectrum of fields, spanning from medicine to financial economics and can be categorized into two groups, 11 deterministic datasets and 10 random datasets \cite{zhuang2022wasserstein}. The datasets comprise values that vary greatly in magnitude, ranging from significantly small to excessively large. Consequently, normalization was performed to scale these values within the range of 0 to 1. Detailed descriptions of these datasets can be found in Tables 2 and 3: 

\begin{table}[H]
\centering
\begin{tabular}{|c|c|c|c|c|}
\hline
Data set & Instance & Cluster & Dimension & Reference \\
\hline
Iris & 150 & 3 & 4 & \cite{misc_iris_53} \\
Seed & 210 & 3 & 7 & \cite{misc_seeds_236} \\
Glass & 214 & 3 & 10 & \cite{misc_glass_identification_42} \\
Mall & 200 & 5 & 5 & \cite{ashwani2023mall} \\
Cancer & 569 & 2 & 33 & \cite{misc_breast_cancer_wisconsin_(diagnostic)_17} \\
Heart Disease & 303 & 5 & 13 & \cite{heart_disease_45} \\
Wine Quality & 4898 & 3 & 11 & \cite{wine_quality_186} \\
Spambase & 4601 & 2 & 57 & \cite{spambase_94} \\
MAGIC Gamma Telescope & 19020 & 2 & 10 & \cite{magic_gamma_telescope_159} \\
Rice (Cammeo and Osmancik) & 3810 & 2 & 7 & 
Rice (Cammeo and Osmancik)(2019) \\
Phishing Websites & 11055 & 2 & 30 & \cite{phishing_websites_327} \\

\hline
\end{tabular}
\caption{Real datasets used}
\label{tab:datasets}
\end{table}

\begin{table}[H]
\centering
\begin{tabular}{|c|c|c|c|c|}
\hline
Data set & Description & Instance & Cluster & Dimension \\
\hline
Random Data & 0-100 & 10000 & 3 & 20 \\
Integer Data & 0-100 & 100 & 3 & 10 \\
Normal Data & mean=0, standard deviation=1 & 1000 & 3 & 20 \\
Exponential Data & $\lambda$=0.5 & 5000 & 5 & 40 \\
Uniform discrete data & 0-9 & 10000 & 7 & 50 \\
Binomial data & n=10, p=0.5 & 50000 & 7 & 70 \\
Gamma data & shape=1, scale=2 & 100000 & 7 & 100 \\
Lognormal data & mean=0, standard deviation=1 & 70000 & 10 & 90 \\
Poisson data & $\lambda$=2 & 80000 & 10 & 100 \\
Bernoulli data & p=0.3 & 90000 & 7 & 10 \\
\hline
\end{tabular}
\caption{Random data sets of different distributions used}
\label{tab:synthetic_data}
\end{table}
To assess the performance of various models \cite{b24}, we employ the 'Silhouette' \cite{lenssen2024medoid}, 'Davies-Bouldin' \cite{davies}, and 'distortion' \cite{b1} indices. Utilizing multiple indices is imperative since a single index cannot adequately capture all aspects of cluster quality. The 'Silhouette' index evaluates the similarity of an object to its own group relative to other groups \cite{ben2001support}, thereby reflecting the balance between cohesion and separation.
The Silhouette index $S(i)$ for a point or object $i$, is calculated using the formula:
$$S(i) = \frac{b(i) - a(i)}{\max\left\{ b(i) - a(i) \right\}},$$ 
where $a(i)$ is the average distance from object $i$ to other objects in the same group, and $b(i)$ denotes the average distance from object $i$ to objects in the nearest neighboring group (the group to which $i$ does not belong). \\
\vskip 1mm
The Davies-Bouldin index quantifies group compactness and separation. A lower index signifies higher-quality clusters, as indicated by the formula:
\(DB = \frac{1}{m}\sum_{i = 1}^{m}{\max_{j \neq i}\left( \frac{\lambda\left( C_{i} \right) + \lambda\left( C_{j} \right)}{d\left( c_{i},c_{j} \right)} \right)},\)
where, \(d\left( R_{i},R_{j} \right)\) is the Euclidean distance between the centers of clusters $i$ and $j$, $m$ is the number of clusters, $(C_{i})$ is cluster $i$, \(\lambda\left( C_{i} \right)\)  signifies the cluster diameter of $i$. \\
\vskip 1mm
The distortion index evaluates the sum of the squares of the distances of the points from the center of their group\cite{vinh2009information}, calculated by the following formula:

$$D = \sum_{i = 1}^{m}{\sum_{x_{j} \in \widetilde{C_{i}}}\left\| c_{i}- x_{j} \right\|^{2}},$$
where $m$ is number of clusters, $x_{j}$ denotes point $j$, $\widetilde{C_{i}}$ represents the cluster set, and \(c_{i}\) signifies the centroid of $\widetilde{C_{i}}$.
We also used  the Calinski-Harabasz index, also known as the proportional variance criterion, is a metric used to assess clustering quality. It is based on the ratio between the sum of intra-cluster dispersions and the sum of inter-cluster dispersions.\\
Calinski's principal equations are defined as follows.
Let \( \mathcal{C} = \{C_1, C_2, \ldots, C_k\} \) be a set of clusters, where \( k \) is the number of clusters. Let \( n \) be the total number of data points, and \( \bar{x} \) the global centroid of the data.\\
Intra-cluster scatter (SSW):
\[ \text{SSW} = \sum_{i=1}^{k} \sum_{x \in C_i} \| x - \bar{x}_i \|^2,\]
where \( \bar{x}_i \) is the centroid of the cluster \( C_i \).\\
Inter-cluster scatter (SSB):
\[ \text{SSB} = \sum_{i=1}^{k} |C_i| \| \bar{x}_i - \bar{x} \|^2, \]
where \( |C_i| \) is the number of points in the cluster \( C_i \).\\
The Calinski-Harabasz index is then defined by :
\[ \text{CH} = \frac{\text{SSB} / (k - 1)}{\text{SSW} / (n - k)}. \]
A high Calinski-Harabasz index indicates better cluster separation. It is used to compare different clustering results and select the optimal number of clusters. 
Finally, we used The Adjusted Rand Index (ARI) is a metric used to measure the similarity between two partitions of a data set\cite{gates2017impact}, taking into account random permutations.\\
The principal equations of ARI are defined as follows.
Let \( \mathcal{U} = \{U_1, U_2, \ldots, U_r\} \) and \( \mathcal{V} = \{V_1, V_2, \ldots, V_s\} \) be two partitions of a set of \( n \) data points.\\
Contingency matrix:
Let's construct a contingency matrix \( \mathbf{M} \) where each element \( m_{ij} \) represents the number of points in common between the cluster \( U_i \) of the partition \( \mathcal{U} \) and the cluster \( V_j \) of the partition \( \mathcal{V} \).\\
Sum of combinations :
\[ a = \sum_{i} \sum_{j} \binom{m_{ij}}{2}, \]
\[ b = \sum_{i} \binom{|U_i|}{2} - a, \]
\[ c = \sum_{j} \binom{|V_j|}{2} - a, \]
\[ d = \binom{n}{2} - (a + b + c), \]

where \( \binom{n}{2} \) is the number of ways to choose 2 elements from \( n \).\\
Rand index: $\text{RI} = \frac{a + d}{\binom{n}{2}}$.\\

Adjusted Rand Index (ARI):
\[ \text{ARI} = \frac{\text{RI} - \mathbb{E}[\text{RI}]}{\max(\text{RI}) - \mathbb{E}[\text{RI}]}, \]

where \( \mathbb{E}[\text{RI}] \) is the expectation of the Rand index for random partitions.

The ARI varies between -1 and 1. A value of 1 indicates a perfect match between the two partitions, a value of 0 indicates that the similarity between the partitions is what would be expected by chance, negative values indicate a match worse than chance.\\

These two indices, Calinski-Harabasz and ARI, are powerful tools for assessing cluster quality and comparing different clustering methods.\\

The following table presents a summary of the results obtained from our search such for each algorithm on a data, we note  Silhouette Index (SI), Davies (Da), Distortion (Di), Calinski (Ca) and Adjusted Rand Index (ARI).
\begin{table}[H]
\centering
\begin{tabular}{|c|c|c|c|c|c|c|}
\hline
Data set&Index&KM &KM++ &KPKM & FCM &RNKM\\
\hline
Iris&SI&0.459 &0.459&0.544&0.549&\textbf{0.680} \\
&Da&0.833&0.833&0.675&0.669&\textbf{0.405}  \\
&Di& 139.82&139.82&79.40&79.36&\textbf{78.85} \\
&Ca& 452.12&452.12&448.77&452.12&\textbf{513.92} \\
&ARI&0.449&0.449&0.442&0.449&\textbf{0.539} \\

Seed &SI&0.471 &0.400 &0.400 &\textbf{0.671} &0.651 \\
 &Da&0.753 &0.927 &0.919 &\textbf{0.446} &0.456 \\
&Di&0.82 &0.75 &0.75 &\textbf{91.77} &95.274 \\
&Ca&387.02 &372.44 &372.44 &\textbf{500.01} &478.20 \\
&ARI&0.44 &0.43 &0.43 &\textbf{0.519} &0.501 \\

Glass&SI&0.426&0.425&0.131&0.253&\textbf{0.430} \\
&Da&1.293&1.314&1.558&1.622&\textbf{1.274} \\
&Di&1418.66&1418.19&1851.57&1579.43&\textbf{1418.86} \\
&Ca&338.10&337.75&150.24&201.88&\textbf{355.85} \\
&ARI&0.36&0.36&0.28&0.30&\textbf{0.37} \\

Mall&SI&0.467&0.553&0.553&0.553&\textbf{0.554} \\
&Da&0.715&0.572&0.571&0.572&\textbf{0.568} \\
&Di&106348.37&44448.45&44454.47&45067.13&\textbf{44537.0}\\
&Ca&230.97&246.57&246.57&246.57&\textbf{247.11}\\
&ARI&0.0023&0.003&0.003&0.003&\textbf{0.0031}\\

Cancer& SI&0.344&0.344&0.120&0.339&\textbf{0.345} \\
& Da&1.312&1.312&1.905&1.324&\textbf{1.311} \\
& Di&11595.46&11595.46&14400.91&11777.18&\textbf{11595.45} \\
&  Ca&197.114&197.114&79.887&200.03&\textbf{11595.45} \\
& ARI&0.51067&0.51067&0.182&0.511&\textbf{0.52} \\

heart disease& SI& 0.1138&0.1138&0.120&0.22&\textbf{0.25} \\
             &Da& 2.1751&2.1751&1.905&2.001&\textbf{1.982} \\
             &Di&2604.4234&2604.4234&2557.065&2487.022&\textbf{2411.833} \\
             &Ca& 35.2208&35.2208&35.26&36.009&\textbf{36.11} \\
             &ARI& 0.1464&0.14647&0.148&0.151&\textbf{0.164} \\
\hline
\end{tabular}

\caption{Results of different models on all data}
\label{tab:result}
\end{table}

The table presents a concise overview of the performance metrics for various clustering algorithms across multiple datasets \cite{liu2021kernel,izakian2011fuzzy}. It effectively highlights the strengths and weaknesses of each algorithm in terms of clustering accuracy, intra-cluster distance and computational time. We will discuss these results in detail later to gain deeper insights and implications for our research.
Upon analysis of various datasets, the RNKM algorithm exhibited superior performance on the Iris dataset across all evaluated metrics \cite{shutaywi2021silhouette,faisal2020comparative}, underscoring its adeptness at managing datasets characteristic of such attributes. In contrast, when applied to the Seed dataset, the FCM algorithm surpassed its counterparts in two out of three metrics, hinting at its advantageous application to datasets prone to cluster overlap. The examination of the Glass dataset revealed a tightly contested field, where RNKM marginally exceeded the performance of competing algorithms in the initial metric. Similarly, an assessment of the Mall dataset indicated a slight edge for RNKM, albeit with negligible discrepancies among the algorithms' performances. Lastly, the Cancer dataset outcomes were closely contested; however, RNKM managed to secure the leading score in the final metric by a slender margin, suggesting its efficacy in this context.

\begin{table}[H]
\centering
\begin{tabular}{|c|c|c|c|c|c|c|}
\hline
Data set&Index&KM &KM++ &KPKM & FCM &RNKM\\
\hline
Wine Quality& SI&0.23507&0.23508&0.22&0.237&\textbf{0.2411} \\
&Da&1.48403&1.48403&1.4700&1.39&\textbf{1.376} \\
& Di&45577.0063&45577.0063&45562.32&45547.46&\textbf{45498.9} \\
& Ca&1844.4573&1844.4573&1851.7&1858.21&\textbf{1872.54} \\
& ARI&0.01802& 0.01802& 0.0181& 0.01821&\textbf{ 0.019} \\

Spambase&  SI&0.65963& 0.658& 0.63& 0.661&\textbf{ 0.67} \\
& Da&0.61265&0.612&0.613&0.611&\textbf{0.6109} \\
& Di&242529.0659&242529.0659&242611.0659&242476.32&\textbf{242455.81} \\
&  Ca&374.094&374.094&371.02&379.4&\textbf{381.58} \\
& ARI&-0.0049&-0.0049&-0.0051&-0.0041&\textbf{-0.0036} \\

MAGIC  &SI&0.29309&0.29309&0.30&0.27&\textbf{0.33} \\
Gamma& Da&1.4393&1.4393&1.43&1.45&\textbf{1.40} \\
Telescope& Di& 136926.7198&136926.7198&136926.73&136926.91&\textbf{1368223.43} \\
& Ca& 7399.2378&7399.2378&7401.02&7389.42&\textbf{7421.003} \\
& ARI&0.00616&0.00616&0.0062&0.00569&\textbf{0.00668} \\

Rice & SI&0.40996&0.40998&0.41&0.42&\textbf{0.44} \\
(Cammeo & Da&0.9512&0.9512&0.94&0.938&\textbf{0.921} \\
and &Di&13928.78723&13928.78723&13911.05&13905.17&\textbf{13891.2} \\
Osmancik)& Ca&3483.3282&3483.3282&3485.61&3492.08&\textbf{3507.92} \\
&ARI&0.6815&0.6815&0.691&0.696&\textbf{0.70} \\

Phishing & SI&0.27965&0.27965&0.25&0.265&\textbf{0.29} \\
Websites& Da&1.9163&1.9162&1.92&1.811&\textbf{1.762} \\
& Di&282208.7645&282208.7645&282310.5&282241.002&\textbf{282174.25} \\
&Ca&1936.4174&1936.4175&1930.11&1934.86&\textbf{1948.09} \\
&ARI&0.002553&0.002553&0.0024&0.00251&\textbf{0.0027} \\

\hline
\end{tabular}

\caption{Results of different models on all data}
\label{tab:result}
\end{table}
Analysis of the results of the table comparing the performance of different clustering algorithms on several datasets shows that the Random Normed K-means (RNKM) algorithm stands out for its superior performance. The algorithms compared include K-means (KM), K-means++ (KM++), Kernel Probabilistic K-means (KPKM), Fuzzy C-means (FCM) and RNKM. For the Wine Quality dataset, RNKM obtained the best values for all indices: an SI value of 0.2411, a Da value of 1.376, a Di distortion of 45498.9, a Ca value of 1872.54 and an ARI of 0.019, indicating better cluster cohesion, separation and quality. 
Similarly, for the Spambase dataset, RNKM stands out with an SI value of 0.67, a Da value of 0.6109, a Di distortion of 242455.81, a Ca value of 381.58 and an ARI of -0.0036, showing superiority in terms of compactness and cluster separation.
For the MAGIC Gamma Telescope dataset, RNKM also obtains the best values, with an SI of 0.33, a Da of 1.40, a Di distortion of 1368223.43, a Ca of 7421.003 and an ARI of 0.00668, confirming its robustness.The Rice dataset (Cammeo and Osmancik) shows similar results with RNKM obtaining an SI of 0.44, a Da of 0.921, a Di distortion of 13891.2, a Ca of 3507.92 and an ARI of 0.70, indicating better clustering quality. 
Finally, for the Phishing Websites dataset, RNKM continues to outperform the other algorithms with an SI of 0.29, a Da of 1.762, a Di distortion of 282174.25, a Ca of 1948.09 and an ARI of 0.0027. In summary, RNKM shows superior performance on all the datasets tested, achieving the best values for almost all performance indices, suggesting that it is a more accurate and efficient clustering method compared to the other algorithms evaluated.
\begin{table}[H]
\centering
\begin{tabular}{|c|c|c|c|c|c|c|}
\hline
Data set&Index&KM &KM++ &KPKM & FCM &RNKM\\
\hline

Exponential &SI&0.0188&0.0188&  0.0037&  0.0012&\textbf{0.356}\\ 
data &Da&5.272 &5.272&  7.801&  5.957&\textbf{0.505}\\                             &Di&\textbf{188349.92}&188349.92&  194519.64&  199999.99&  199850.51\\

&Ca&77.3425&77.3425&  52.008& 43.701& \textbf{81.65}\\ 
&ARI&-0.0001753&-0.0001753& -0.00019&  -0.000283&\textbf{-0.00014}\\ 
       
Gamma  &SI&  0.0101& 0.0101& -0.0005& -0.0018&\textbf{0.2427}\\
data&Da&7.7868& 7.7868&12.9260& 9.0271&\textbf{0.6310}\\  
&Di& 9686026.51&\textbf{9686026.49}& 9893027.77& 9999999.97& 9999753.61\\
&Ca& 539.7245&539.7245&235.91 &203.07 &\textbf{678.93}\\	
&ARI& 0.000004087&\textbf{0.000004087}& 0.0000002& 0.00000024& 0.000004068\\

Lognormal &SI& 0.02201&0.022&0.003&-0.059&\textbf{0.858}\\
data  &Da&5.624 &  5.624&7.606&7.678&\textbf{0.098}\\
   &Di& 5989152.61& \textbf{5989152.60}& 6099553.48&6299999.97&6291783.59\\
  & Ca&61.0065& 61.0065& 24.51&15.03&\textbf{82.93}\\
   & ARI&0.0000328& \textbf{0.0000328}& 0.0000004&0.000003&0.000086\\
   
Random   &SI&0.065&0.07136&0.062&0.056&\textbf{0.549}\\
data &Da&3.078&3.526&3.118&3.324&\textbf{0.555}\\
 &Di&\textbf{9671.91}&10214.79&9704.99&10999.99&104067.88\\
 &Ca&77.4657&77.90&77.56&77.38&\textbf{164.04}\\
  &ARI&0.0000955&0.00014&0.0000917&0.0000912&\textbf{0.000265}\\
  
Integer  &SI&\textbf{0.099}&  0.093&  0.043& 0.058&  0.0756\\
data &Da&\textbf{2.410} &  2.729   & 3.220 & 2.762& 2.552\\
  &Di&\textbf{817.14} &  896.19 & 879.89& 999.99& 834.40\\
  &Ca&\textbf{12.1421} & 12.1421 & 12.12& 12.03& 12.137\\
    &ARI&\textbf{-0.004559} &  -0.004559 & -0.0046& -0.00502& -0.004562\\

\hline
\end{tabular}

\caption{Results of different models on all data}
\label{tab:result}
\end{table}

\begin{table}[H]
\centering
\begin{tabular}{|c|c|c|c|c|c|c|}
\hline
Data set&Index&KM &KM++ &KPKM & FCM &RNKM\\
\hline
    
Normal  &SI&0.035&\ 0.04& 0.007& 0.030&\textbf{0.041}\\                          data &Da&4.195& 4.768& 6.877& 4.487& \textbf{4.407}\\
 &Di&\textbf{18610.21}& 19176.71& 19440.77& 19999.99&18718.38\\
  
 &Ca&36.6186& 30.5& 26.6186& 30.1&\textbf{53.84}\\
  &ARI&0.0000635& 0.000024& 0.000031& 0.000069&\textbf{0.00084}\\
	        
Uniform  & SI& 0.012& 0.012& -0.121& 0.00928& \textbf{0.0121}\\
   discrete  &Da&5.744&5.744& 2.253&  6.3017& \textbf{5.74}  \\
data   &Di&474729.38&474729.38& 499677.36& 499999.99&\textbf{474218.46}\\
    &Ca& 89.2879&89.2879& 21.08& 56.75&\textbf{93.28}\\                      
     &ARI&-0.0001203&-0.0001203& -0.00080& -0.00011&\textbf{0.00038}\\

Binomial  &SI&\textbf{0.0078} &\textbf{0.0078} & -0.0028  &   0.0067 &0.00729\\
 data & Da&7.1108 &\textbf{7.11016} & 18.4248 &  7.57755 & 7.2545\\
& Di&3383094.18 &\textbf{3383094.1} & 3383094.18 &  3499999.99 &  3386893.30\\
& Ca& 287.4698 &\textbf{287.4698} & 135.84 &  287.447 &  287.45\\
&ARI& -0.0000197 &\textbf{-0.0000197} & -0.000032 &  -0.0000199 &  -0.0000198\\

Poisson  &SI&0.006&0.006& -0.002&0.003&\textbf{0.0032}\\
data & Da&7.272&7.272&18.369&8.427&\textbf{7.177}\\
& Di&\textbf{7733318}.87&\textbf{7733318}.87& 7950286.12& 7999999.97&7850406.49\\
& Ca&306.1273.87&306.1273& 104.2& 282.04&\textbf{311.9}\\
& ARI& -0.000000115.87&-0.000000115& -0.00000049& -0.00000007&\textbf{-0.00000005}\\

Bernoulli  &SI&0.0035&0.0035&-0.0399& 0.0026&\textbf{ 0.0054}\\
 data  &Da&9.755& 9.755&12.584&10.699 &\textbf{ 5.739}\\
 &Di&\textbf{13206629}.48& \textbf{13206629.48}&13499999.99 &13499066.05&13244904.37\\
  &Ca&38.7436&  38.7436&14.006 &34.27&\textbf{42.87}\\
 &ARI&-0.00002586&-0.00002586&-0.00019 &-0.000028&\textbf{0.0000011}\\
   
\hline
\end{tabular}
\caption{Results of different models on all data}
\label{tab:result1}
\end{table}
\vskip 1mm

The performance of the algorithms on other data sets \cite{fard2020deep}, such as Exponential, Gamma, Lognormal, Random, Integer, Normal, Uniform discrete, Binomial, Poisson, and Bernoulli, varied significantly. RNKM frequently emerged as the top performer, particularly in the first metric, which might suggest its robustness in handling different data distributions. However, it is important to note that the performance of clustering algorithms can be highly dependent on the nature of the data and the evaluation metrics used. While RNKM often showed the best results, Overall, the analysis suggests that RNKM is a strong contender in the clustering algorithm space, often providing superior performance. Nevertheless, the other algorithms also have their merits and may be preferable in certain scenarios.\\
\vskip 1mm

The empirical evidence suggests that the RNKM algorithm, which employs function distribution distances in lieu of the traditional Euclidean metric, confers several benefits in clustering tasks across diverse datasets. The utilization of function distribution distances enables RNKM to perform distribution-sensitive clustering \cite{el2024understanding}, enhancing its responsiveness to the intrinsic distributions of data points and thereby potentially yielding more precise clustering outcomes \cite{singh2023probabilistic}. This methodology also improves the handling of non-spherical clusters, as it is not constrained by the spherical bias inherent to Euclidean distance, thus accommodating clusters with irregular shapes and varied densities. Moreover, the robustness of function distribution distances to outliers offers an advantage, as these measures can diminish the impact of anomalous data points that may skew the overall distribution. The flexibility afforded by the selection of an appropriate function distribution distance allows for a more nuanced definition of cluster similarity, tailored to the specific features of the dataset. Additionally, the potential for better convergence is heightened when the chosen distance measure more accurately reflects the inter-point relationships, steering the algorithm toward an optimal clustering solution. Finally, RNKM exhibits adaptability to different data types, with the capacity to be calibrated for continuous, categorical, or mixed-type data, depending on the function distribution distance implemented.

We generated random data without using a determinate distribution, we obtained the following results represented in the following figures:
\vskip 1mm

\begin{center}

    \begin{tabular}{|c|c|c|}
        \hline
        \textbf{Random before } & \textbf{ K-means} 
      &  \textbf{Random K-means} \\
        \hline
        \includegraphics[width=0.3\linewidth]{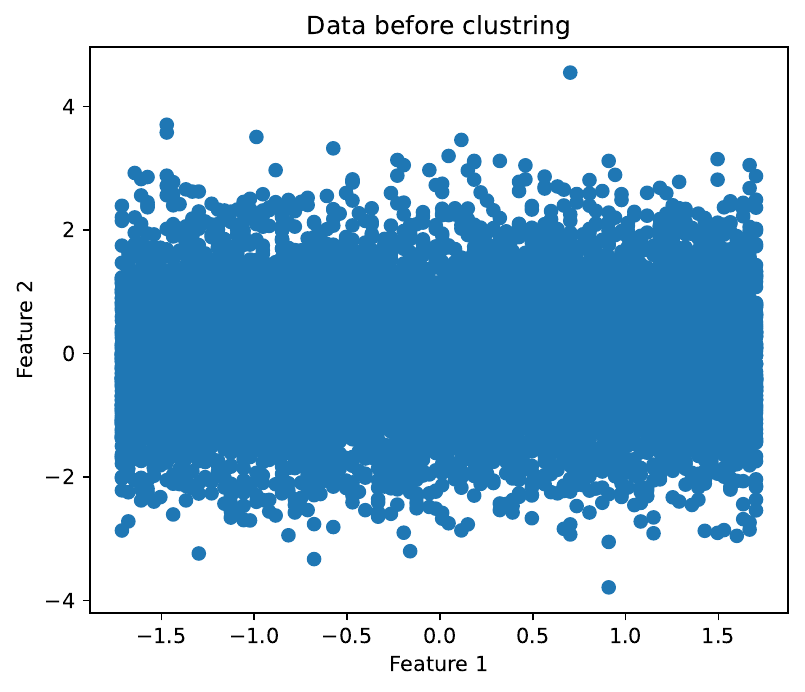} &          \includegraphics[width=0.3\linewidth]{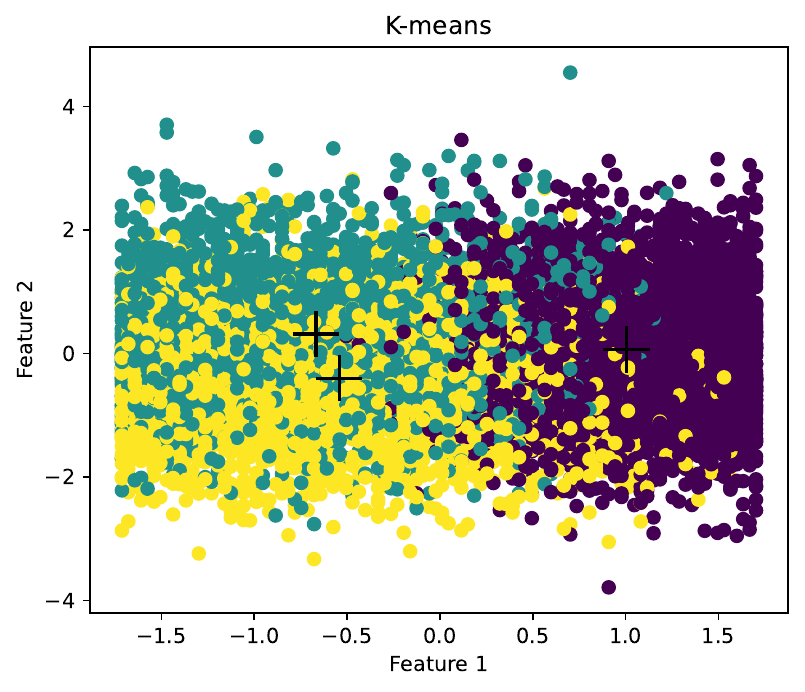} 
  &\includegraphics[width=0.3\linewidth]{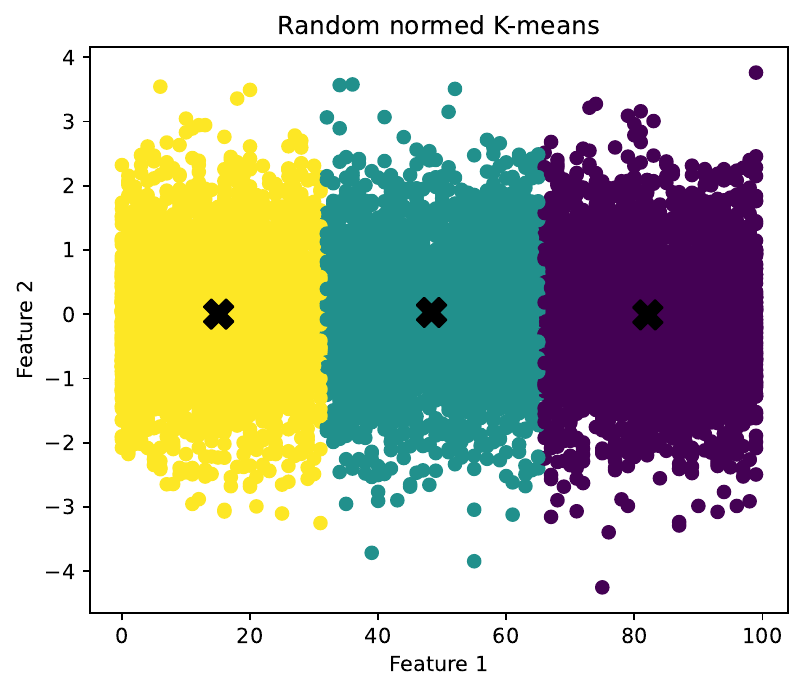}    \\
        \hline
        \textbf{Instance 10000} & \textbf{Number of cluster is 3 }&\textbf{Number of cluster is 3 } \\
        \hline
        Dimension is 20  &  Dimension is 20&  Dimension is 20 \\
        \hline
          Figure 1  & Figure 2& Figure 3 \\
        \hline
    \end{tabular}
  
    \label{tab:}

\end{center}

Figure 1 shows the raw data before any clustering has been applied. The data points are uniformly distributed across the feature space, with no apparent grouping or structure. This scatter plot serves as a baseline, illustrating the initial state of the data.\\

Figure 2 displays the results of applying the K-Means clustering algorithm to the data. The data points are divided into three clusters, each represented by a different color (yellow, teal, and purple). The black crosses indicate the centroids of the clusters. The clusters are arranged horizontally, with the yellow cluster on the left, the teal cluster in the middle, and the purple cluster on the right. The boundaries between the clusters are relatively clear, demonstrating that K-Means has effectively partitioned the data based on its inherent structure.\\

Figure 3 shows the results of applying a variation of K-Means clustering, referred to as "Random Normed K-Means." Here, the data points are also divided into three clusters, represented by yellow, teal, and purple colors. The black crosses again indicate the centroids of the clusters. In this case, the clusters are more distinctly separated, with less overlap between them compared to the standard K-Means clustering. The horizontal arrangement of the clusters is more pronounced, and the clusters appear more compact and well-defined.\\

Overall, The initial scatter plot of the data before clustering reveals no obvious structure or grouping, indicating a uniform distribution. Upon applying K-Means clustering, three distinct clusters with clear boundaries and well-defined centroids emerged, demonstrating the method's capability to effectively partition data based on its inherent structure. A variation, Random Normed K-Means Clustering, produced even more distinct and compact clusters with clearer separations, suggesting an improvement in clustering performance in terms of cluster compactness and separation. Both K-Means and Random Normed K-Means effectively created meaningful clusters from the data. However, Random Normed K-Means seems to offer superior performance for this dataset, yielding more well-defined and distinct clusters.\\

\begin{center}   
    \begin{tabular}{|c|c|}
        \hline
        \textbf{Random gamma data before} & \textbf{Random normed Kmeans} \\
        \hline
        \includegraphics[width=0.4\linewidth]{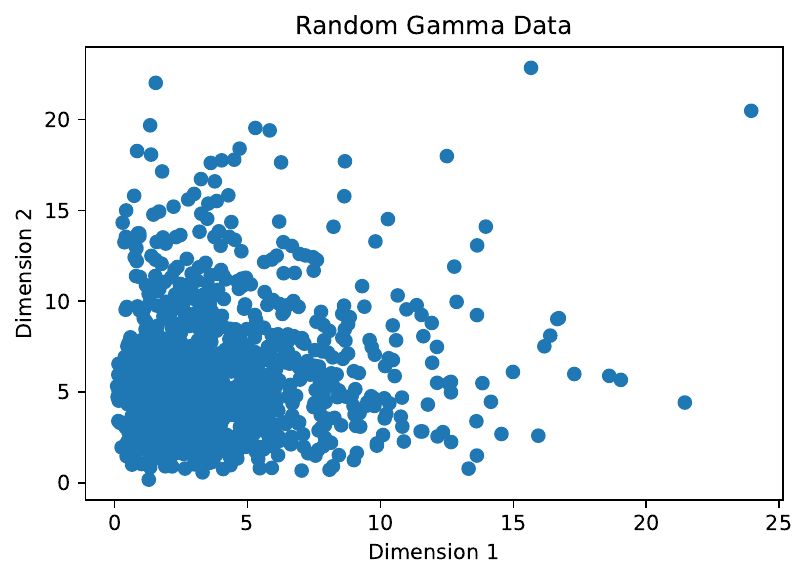} & \includegraphics[width=0.4\linewidth]{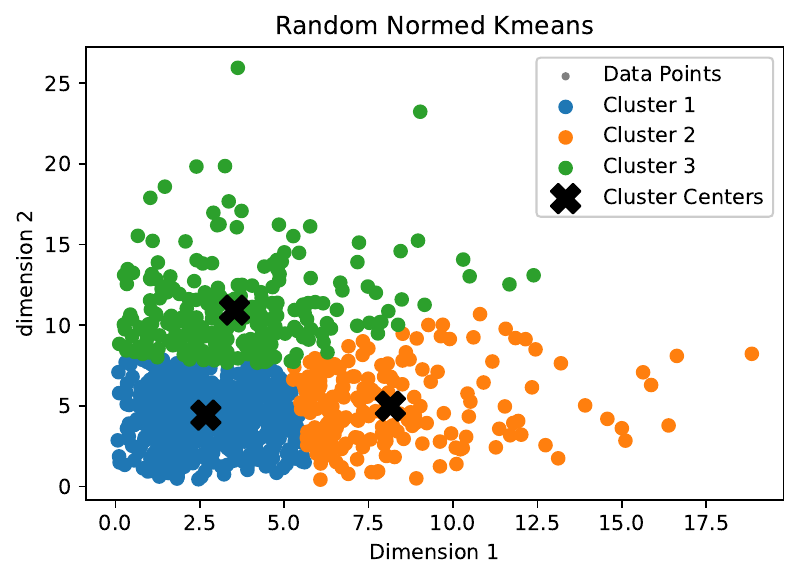} \\
        \hline
        \textbf{Instance 1000} & \textbf{Number of clusters is 3} \\
        \hline
                \textbf{Figure 4} & \textbf{Figure 5} \\
        \hline
    \end{tabular}
   
    \label{tab:1}
\end{center} 	
For the Figure 4 'Data before clustering' , 
This figure shows a scatter plot illustrating a random gamma distribution of data points on the plane, without any form of predefined labeling or grouping. The data points are unevenly distributed, with some areas showing a denser aggregation of points, suggesting potential natural groupings. However, these potential groupings are not explicitly marked or defined, making it difficult to identify clear patterns, trends or anomalies in the dataset. This unstructured nature of the data poses important analytical challenges, as the lack of identifiable groupings makes it difficult to draw meaningful conclusions or make informed decisions on the basis of the data presented.

For Figure 5, after application of the Random Normed K-Means clustering algorithm, this figure reveals a marked transformation in the structure of the data set. The data points are now divided into three distinct clusters, each designated by a unique color (blue, orange and green), providing a clearer understanding of the composition of the dataset. At the center of each cluster is a black “X”, representing the cluster centroid. These centroids indicate the average position of all points within a cluster. 
The clusters exhibit varying densities, with the blue cluster encompassing the densest aggregation of points, followed by the orange and green clusters, each capturing different spatial regions of moderate density. This color-coded segmentation of the dataset into meaningful groups significantly simplifies the data analysis process, enabling the identification of hidden patterns and relationships.
The transition from an unclustered to a clustered dataset vividly illustrates the efficacy of clustering algorithms, such as Random Normed K-Means, in organizing and making sense of data. By grouping similar data points and pinpointing the central tendencies within each cluster, clustering unveils valuable insights and significantly enhances the dataset's analytical utility.\\

The clustering of data using Random Normed Kmeans has far-reaching applications and implications across various fields. It simplifies complex datasets by organizing them into interpretable groups, thereby facilitating easier analysis and insight extraction. Clustering aids in pattern recognition, allowing for the identification of underlying trends that may not be immediately apparent in an unclustered dataset. Additionally, it enables anomaly detection by highlighting points that do not conform to any cluster, which is particularly useful in areas such as fraud detection. In marketing, clustering can be employed to segment customers based on purchasing behaviors, enabling the development of targeted marketing strategies. Furthermore, in image processing, clustering assists in grouping pixels with similar attributes, which is instrumental in object detection and image analysis.

\subsection{The Sensitivity Analysis of RNKM Method}	

When applying the algorithm for data clustering, the parameter $t$ in the distance distribution function plays a crucial role in assessing cluster convergence. This study investigates the impact of $t$ by analyzing evaluation indices in two distinct scenarios, which are using randomly generated data and applying the algorithm to real data. In the case of randomly generated data, we observe significant variations in evaluation indices as a function of $t$. Different values of $t$ can result in divergent results, illustrating the sensitivity of this algorithm to inherent fluctuations in random data. This variability underscores the necessity for careful adjustment of $t$ when utilizing synthetic data. The following Figures 6, 7, 8, 9 and 10 depict the variations of the five evaluation indices corresponding to changes in the parameter $t$ on random gamma data with 300 instances and 3 clusters.\\
\vskip 1mm
\begin{table}[h!] 
\centering
\renewcommand{\arraystretch}{1.5} 
\begin{tabular}{|c|c|c|}
    \hline
    \textbf{Random gamma data} & \textbf{Random gamma data} & \textbf{Random gamma data} \\
    \hline
    \includegraphics[width=0.3\linewidth]{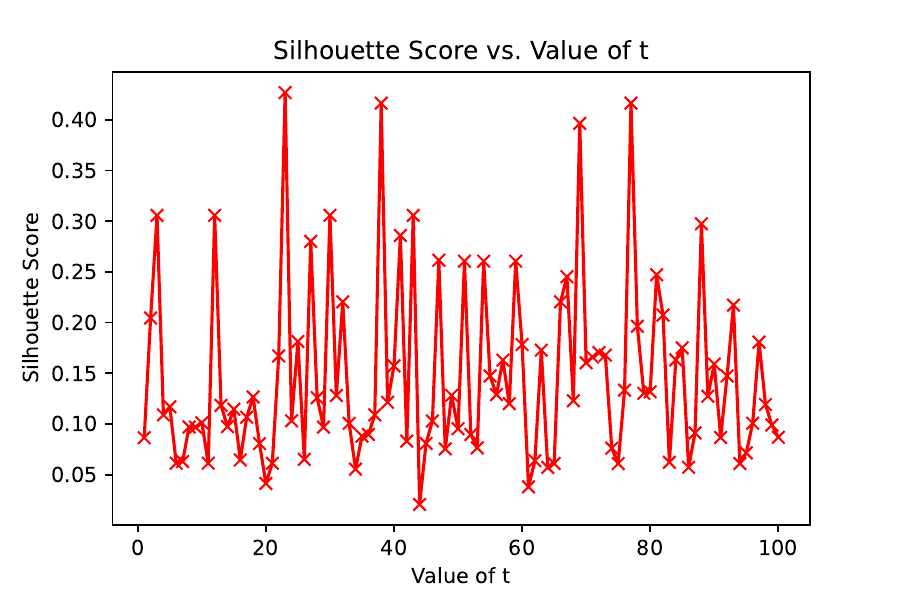} & 
    \includegraphics[width=0.3\linewidth]{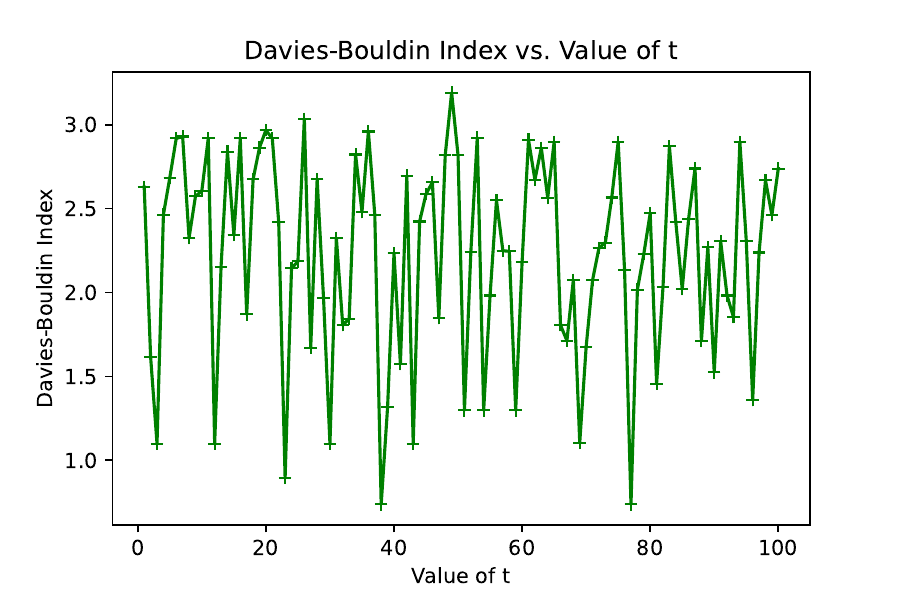} & 
    \includegraphics[width=0.3\linewidth]{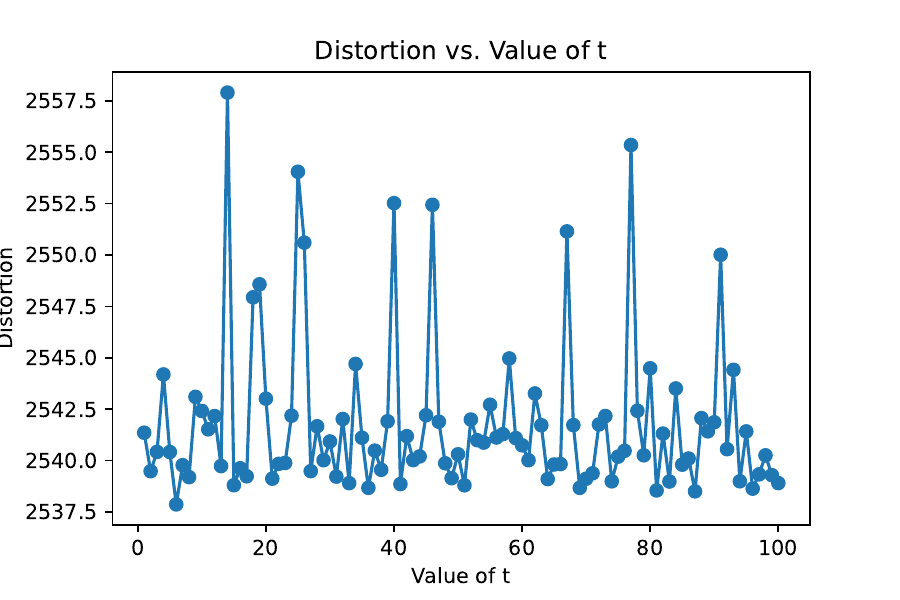} \\
    \hline
    \textbf{Variations in Silhouette index} & 
    \textbf{Variations in Distortion index} & 
    \textbf{Variations in Davies-Bouldin } \\
    \hline
    \textbf{Figure 6} & \textbf{Figure 7} & \textbf{Figure 8} \\
    \hline
    \includegraphics[width=0.28\linewidth]{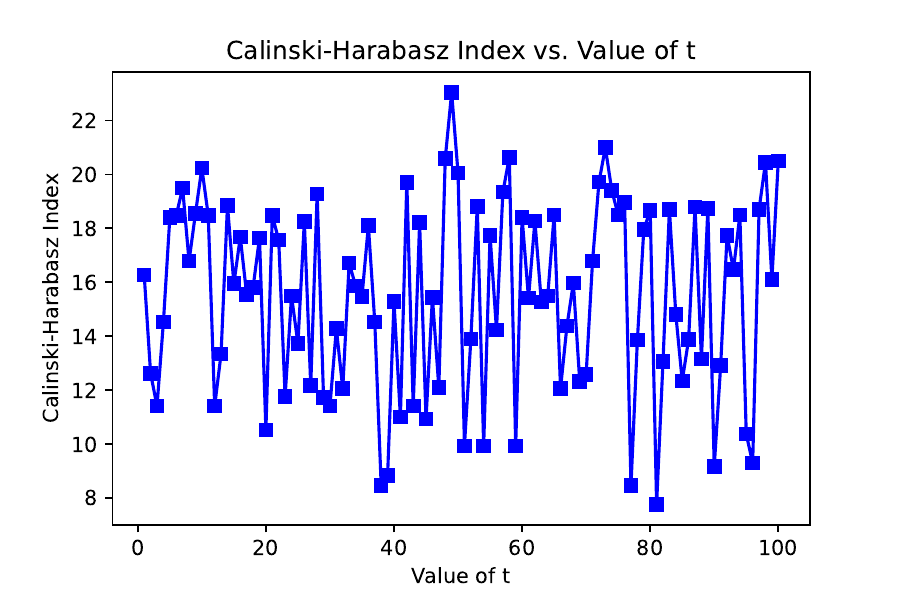} & 
    \multicolumn{2}{c|}{\includegraphics[width=0.3\linewidth]{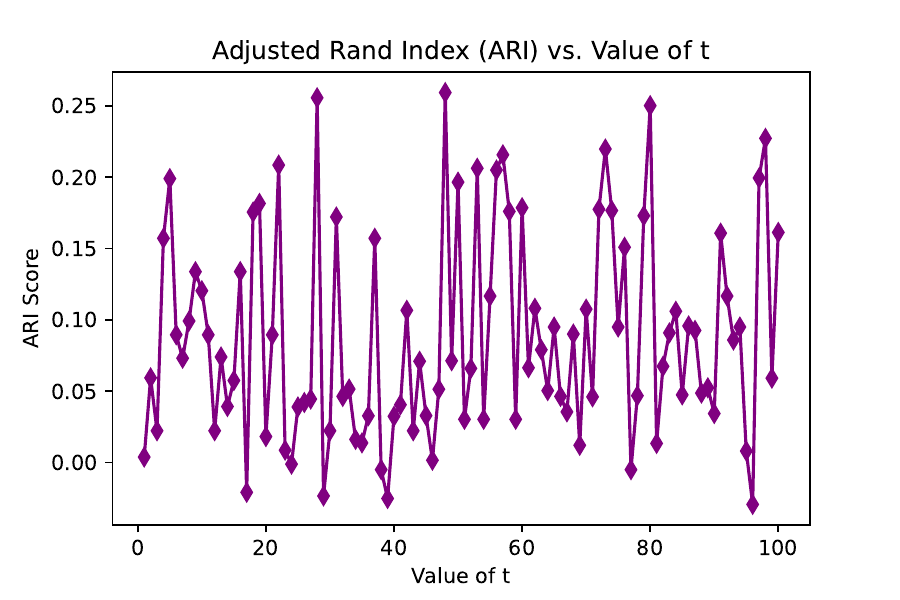}} \\
    \hline
    \textbf{Variations in Calinski-Harabasz index} & 
    \multicolumn{2}{c|}{\textbf{Variations in Adjusted Rand index}} \\
    \hline
    \textbf{Figure 9} & 
    \multicolumn{2}{c|}{\textbf{Figure 10}} \\
    \hline
\end{tabular}
\caption{Description of index variations for random gamma data.}
\label{table:gamma_indices}
\end{table}
\vskip 1mm		
Conversely, when analyzing real data, the evaluation indices demonstrate greater consistency. In this scenario, the results exhibit minimal variation as a function of $t$ and in some cases, even indicate gradual changes. This observation suggests that the underlying structures within real data confer intrinsic robustness to the algorithm, thereby diminishing the dependence on specific choices of $t$. These findings underscore the significance of contextual considerations when configuring the algorithm's parameters. While a cautious approach to selecting $t$ remains crucial for random data, real-world datasets offer a degree of flexibility owing to their inherent stability. This inherent stability can be harnessed in the segmentation process. The subsequent three figures illustrate the variations of the three evaluation indices corresponding to changes in the parameter $t$ on random Iris data.
\vskip 1mm
\begin{table}[h!]
\centering
\renewcommand{\arraystretch}{1.5} 
\begin{tabular}{|c|c|c|}
    \hline
    \textbf{Iris data} & \textbf{Iris data} & \textbf{Iris data} \\
    \hline
    \includegraphics[width=0.3\linewidth]{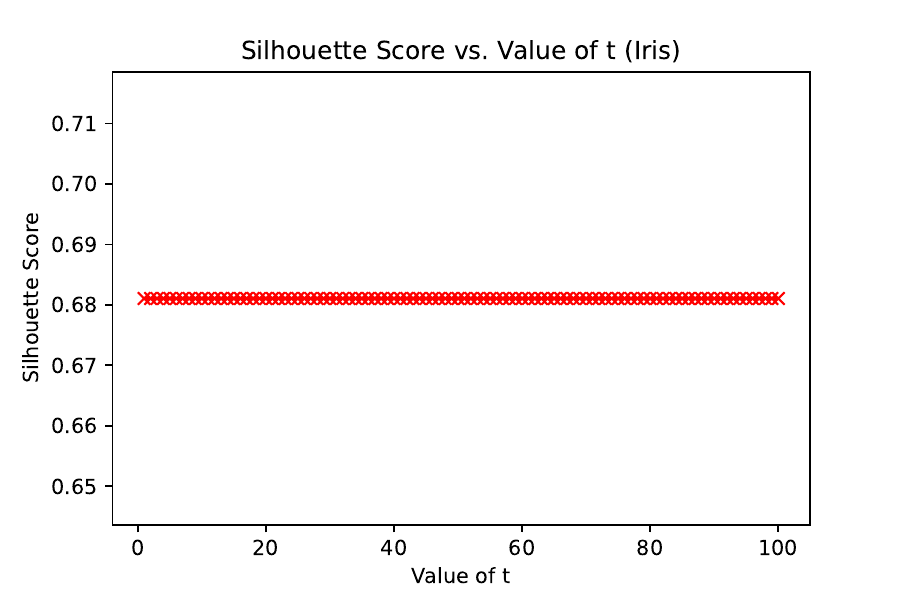} & 
    \includegraphics[width=0.3\linewidth]{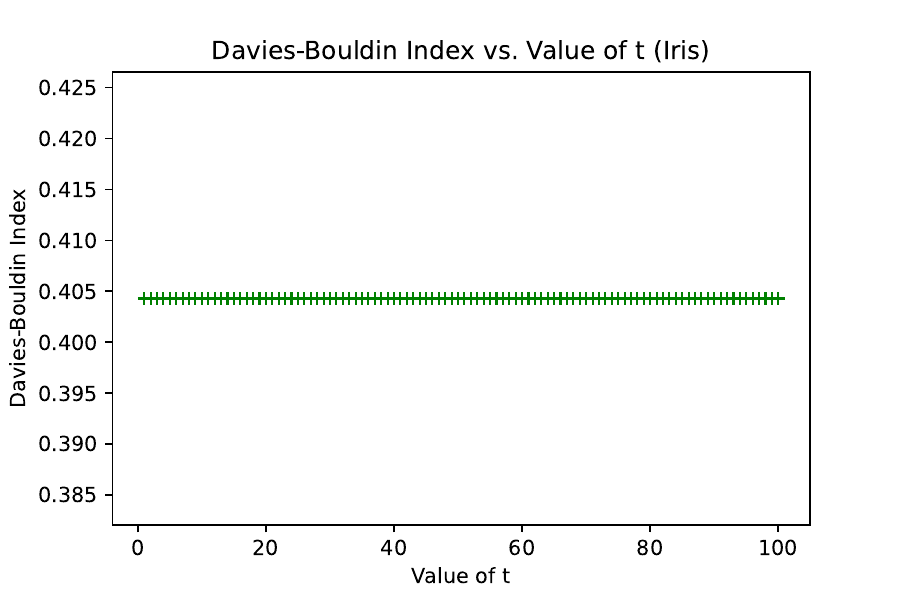} & 
    \includegraphics[width=0.3\linewidth]{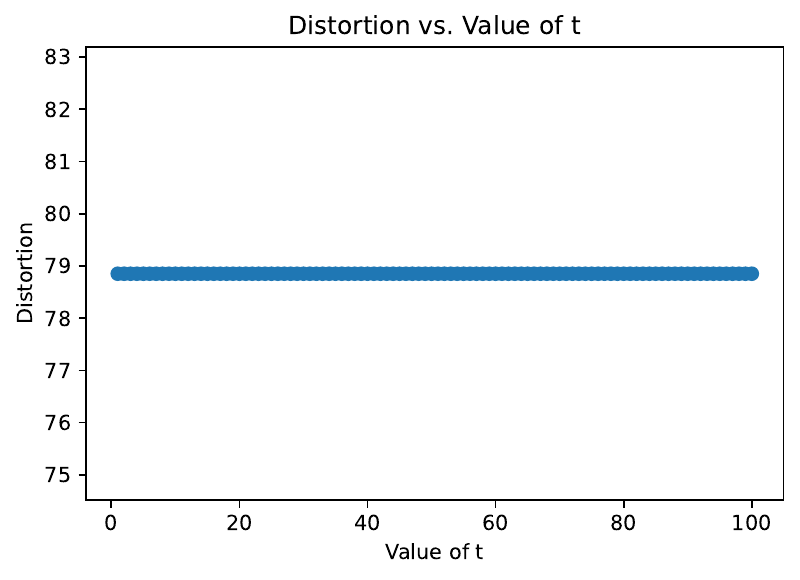} \\
    \hline
    \textbf{Variations in Silhouette index} & 
    \textbf{Variations in Distortion index} & 
    \textbf{Variations in Davies-Bouldin} \\
    \hline
    \textbf{Figure 10} & \textbf{Figure 11} & \textbf{Figure 12} \\
    \hline
    \includegraphics[width=0.3\linewidth]{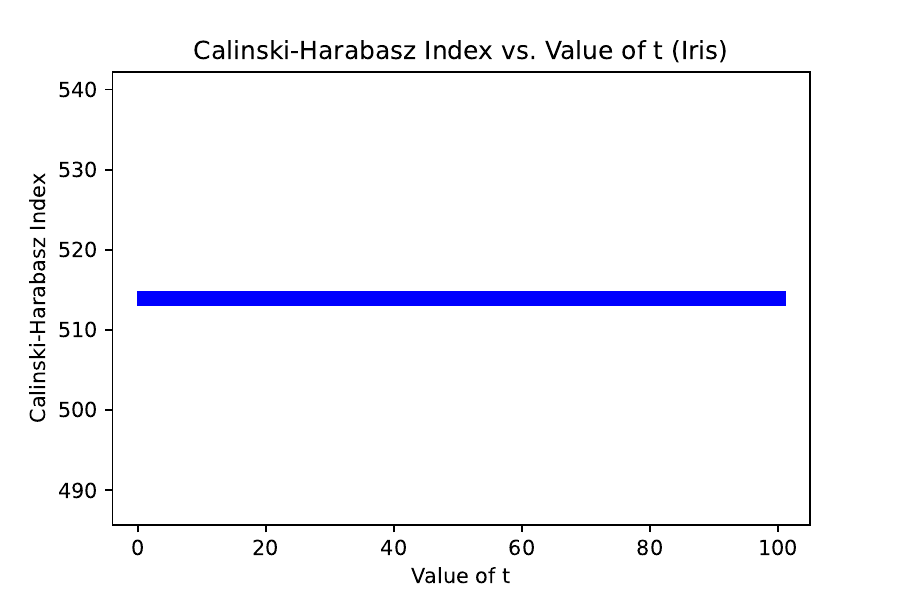} & 
    \multicolumn{2}{c|}{\includegraphics[width=0.3\linewidth]{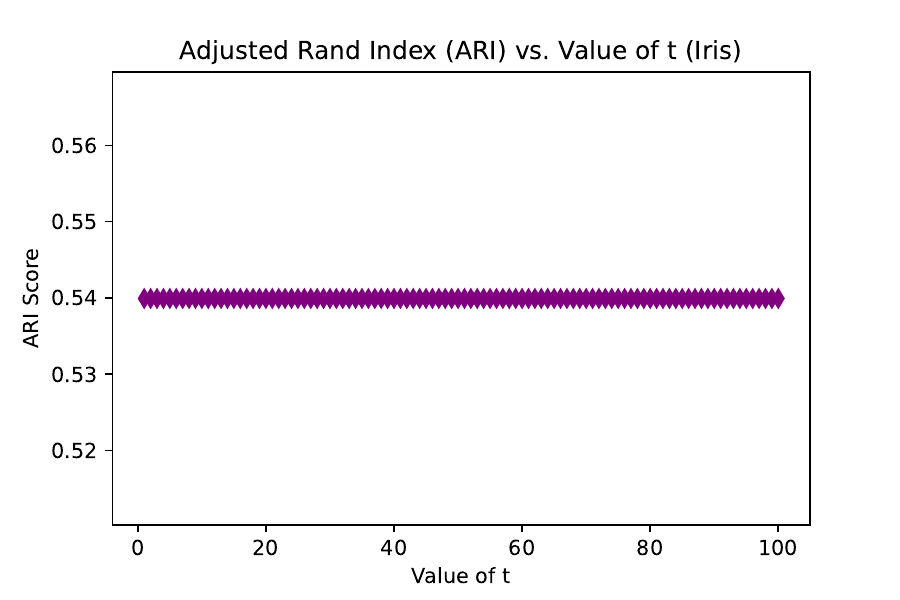}} \\
    \hline
    \textbf{Variations in Calinski-Harabasz index} & 
    \multicolumn{2}{c|}{\textbf{Variations in Adjusted Rand index}} \\
    \hline
    \textbf{Figure 13} & 
    \multicolumn{2}{c|}{\textbf{Figure 14}} \\
    \hline
\end{tabular}
\caption{Description of index variations for Iris data.}
\label{table:iris_indices}
\end{table}

\vskip 1mm
\newpage
\subsection{Comparison with other clustering methods}

In this study, the outcomes derived from our proposed k-means method yielded exceptionally promising results. We conducted a comparative analysis of our approach against existing k-means methods across diverse datasets, encompassing both real-world and synthetic data \cite{sinan2023advances}. Our findings revealed a distinct enhancement in the quality of clusters produced by our method. Notably, there was a substantial reduction in intra-cluster variance, indicating heightened similarity among points within each cluster. Additionally, we observed an augmented inter-cluster separation, signifying improved differentiation among clusters.\\
\vskip 1mm

Upon analyzing evaluation measures such as the silhouette index \cite{lenssen2024medoid}, Davies-Bouldin index \cite{davies} and distortion \cite{b1}, we observed that our method consistently outperformed traditional approaches in the majority of cases. The clusters produced by our method exhibited greater coherence and structural integrity, facilitating their interpretation and subsequent utilization in tasks such as classification or prediction. Additionally, our method demonstrated remarkable accuracy across datasets of varying sizes and random data. By evaluating our model on both real and random datasets, including sets with diverse distributions as depicted in the table above, we validated the robustness and generalizability of our approach for datasets of different characteristics. Our method maintained its performance across small and large datasets, affirming its efficacy in handling substantial volumes of data efficiently. These encouraging findings underscore the potential of our k-means method across various application domains \cite{faisal2020comparative}. Nonetheless, further investigations are warranted to validate and extend these results, particularly by exploring random datasets and those with higher dimensionality, as well as addressing domain-specific challenges. Continued research efforts will contribute to the refinement and broader applicability of our approach.\\

The results obtained provide compelling evidence of the generalizability of our model. Upon application to all seven datasets, we observed a remarkable consistency in performance. The average accuracy, as measured by the metric, demonstrated improvement in the random case and remained satisfactory in the deterministic case. These encouraging outcomes indicate the capacity of our model to effectively adapt to diverse contexts, underscoring its versatility and robustness. Among the real datasets examined, including Iris, Seed, Glass, Mall Customers, and Cancer datasets, our model exhibited commendable accuracy. This suggests its capability to capture the intricate nuances inherent in dataset domains. Notably, even within the random context, where we generated three random exponential datasets, Gamma and lognormal distributions, our model maintained a strong performance compared to existing K-Means variants. The consistency in performance across various datasets prompts critical inquiries regarding the generalizability of our model beyond specific domains. Essential components of the model, such as the distance distribution function with sensitivity $t$, may play a pivotal role in facilitating this adaptability.\\
\vskip 1mm

The RNKM algorithm that we have proposed represents a promising advancement in clustering methodologies, offering distinct advantages over other variants. Specifically, it exhibits greater stability, possesses the ability to accommodate nonlinear data and is more resilient in the presence of outliers. These advantages hold significant value in the realm of real-world data analysis, where datasets often exhibit noise, nonlinear patterns, and outliers. In such scenarios, standard K-Means algorithms may yield suboptimal clustering outcomes. In contrast, the RNKM demonstrates its effectiveness in generating high-quality clustering results, even when applied to random data. This effectiveness stems from the utilization of a probabilistic distance function within the algorithm, which proves to be more adept at capturing data perturbations compared to the conventional Euclidean distance metric. Overall, the RNKM algorithm and its probabilistic distance function offer promising avenues for improved clustering performance across various data types and scenarios.
\vskip 1mm

\subsection{Iteration of $t$ in different data}

The function $\Gamma_{p-q}(t)$ is designed to evolve as the iterations progress, allowing the algorithm to adaptively refine cluster assignments with an increasing focus on the most probable data points. The probabilistic nature of the space is encapsulated by the iteration dependent distance measure, which probabilistically weighs the influence of each point on the centroid's position. As increases, the influence of distant points diminishes, enabling a more precise convergence towards the true cluster centroids. This approach aims to mitigate the sensitivity of K-means to initial conditions and outliers, providing a more robust clustering solution in complex data landscapes. The convergence properties and clustering efficacy of the proposed algorithm are rigorously evaluated against standard benchmarks, demonstrating its potential in extracting meaningful patterns from intricate datasets.\\

In the RNKM algorithm, the parameter $t$ plays an essential role not only in the assignment of data points to clusters, but also in the updating of centroids at each iteration. Indeed, the $t$-weighted assignment gives more importance to proximity as $t$ increases. This has a direct influence on centroid positioning. When $t$ takes on a high value, points are assigned primarily on the basis of their proximity to existing centroids, with little regard for actual distance. This tends to cluster points more compactly around centroids. As a result of this strong aggregation, centroids then tend to be closer to local point densities and better fit the shape of clusters.\\

On the other hand, for small values of $t$, assignment is based more on the actual Euclidean distances between points and centroids. Points can then be assigned to more distant centroids, this results in more spatially dispersed clusters. As a result of this looser distribution of points, centroids tend to be positioned more centrally and globally within clusters, without necessarily coinciding with maximum densities. In this way, we can see that the parameter $t$ allows the compactness of the clusters estimated by Random K-means to vary continuously, and consequently to adjust the position of the centroids between a local or global estimate of the clusters.\\

These images represent the variation of centroid positions according to the first 10 iterartions of $t$ on Mall customers dataset.\\
 
\setcounter{figure}{14} 
\begin{figure}[H]
    \centering
    \includegraphics[width=1\linewidth]{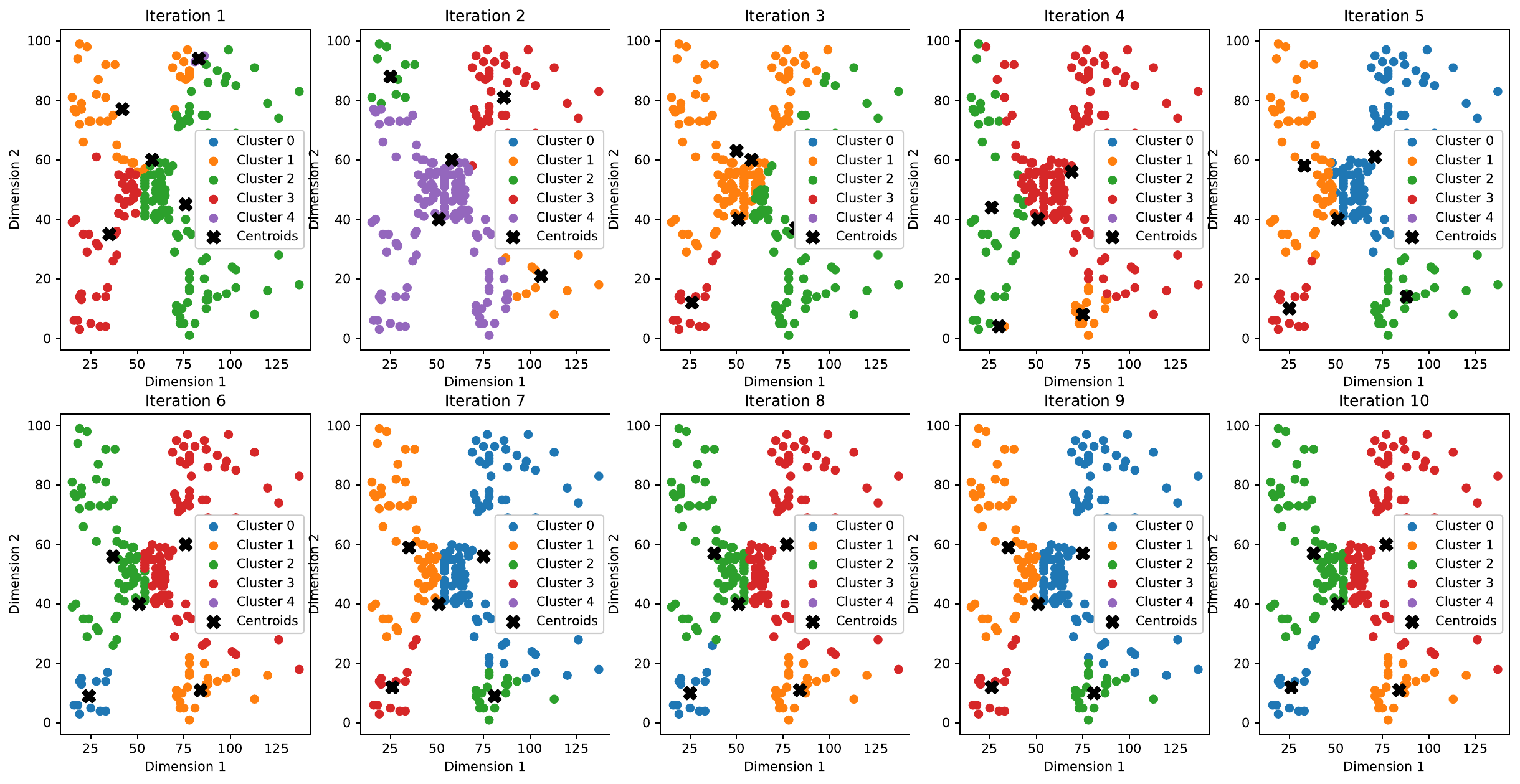}
    \caption{Influence iteration of $t$ on centroid positions in Mall customers dataset}
    \label{fig:2}
\end{figure}
The visualization, consisting of ten iterations, highlights key dynamics of cluster formation and centroid positioning within a two-dimensional space. The data points are colored by cluster assignment, and the centroids, marked by black Xs, evolve over iterations to minimize the weighted distances influenced by t.

First of all, in the early iterations (1 to 2) we have an initialization phase in which centroids change a lot. Cluster (less defined, due to the random influence coming from the starting component). The second one is that there is an alternation phenomena exists for the middle few iterations (3-7). Centroids fluctuated to different positions, vadiable distance function lead them to multiple local optima. We especially see this in the way clusters periodically reorganize. Thirdly, towards the final iterations (8-10), the algorithm tends towards a more stable configuration. Nevertheless, centroid positions continue to adjust slightly, demonstrating the system's ongoing sensitivity to parameter t.

The sensitivity parameter t plays a crucial role in modulating the impact of the Euclidean distance. For higher t values, the influence of the distance diminishes, promoting more uniform clustering, while lower t values make centroids more sensitive to nearby points. In early iterations, significant centroid movements indicate rapid adaptation to local density patterns, leading to visible reorganization of cluster assignments. As iterations progress, the algorithm stabilizes, with centroids converging to equilibrium positions, reflecting an optimal balance of cluster density and separation.

An interesting observation is the alternating patterns in cluster configurations during later iterations, which suggest the algorithm's sensitivity to local optima driven by the interplay between the $\Gamma_{p-q}(t)$ and the underlying data structure. Additionally, the number of clusters appears adaptive, with transitions between four and five clusters across iterations. This behavior, alongside the enhanced sensitivity to local density, underscores the flexibility of the this metric compared to standard Kmeans.

Integration of t into the distance metric enables the algorithm to effectively handle datasets with non-uniform densities and complex cluster shapes. This adaptive behavior, combined with rapid stabilization, offers an advantage over classical K-means, especially in scenarios where traditional approaches might struggle. Future investigations could further analyze the impact of different tt values and compare these results with classical K-means to quantify the improvements achieved.

Similarly, the following images illustrate the fluctuation of centroid positions throughout the initial 10 iterations of $t$ on the Gamma dataset.
\vskip 1mm

\begin{figure}[h]
    \centering
    \includegraphics[width=1\linewidth]{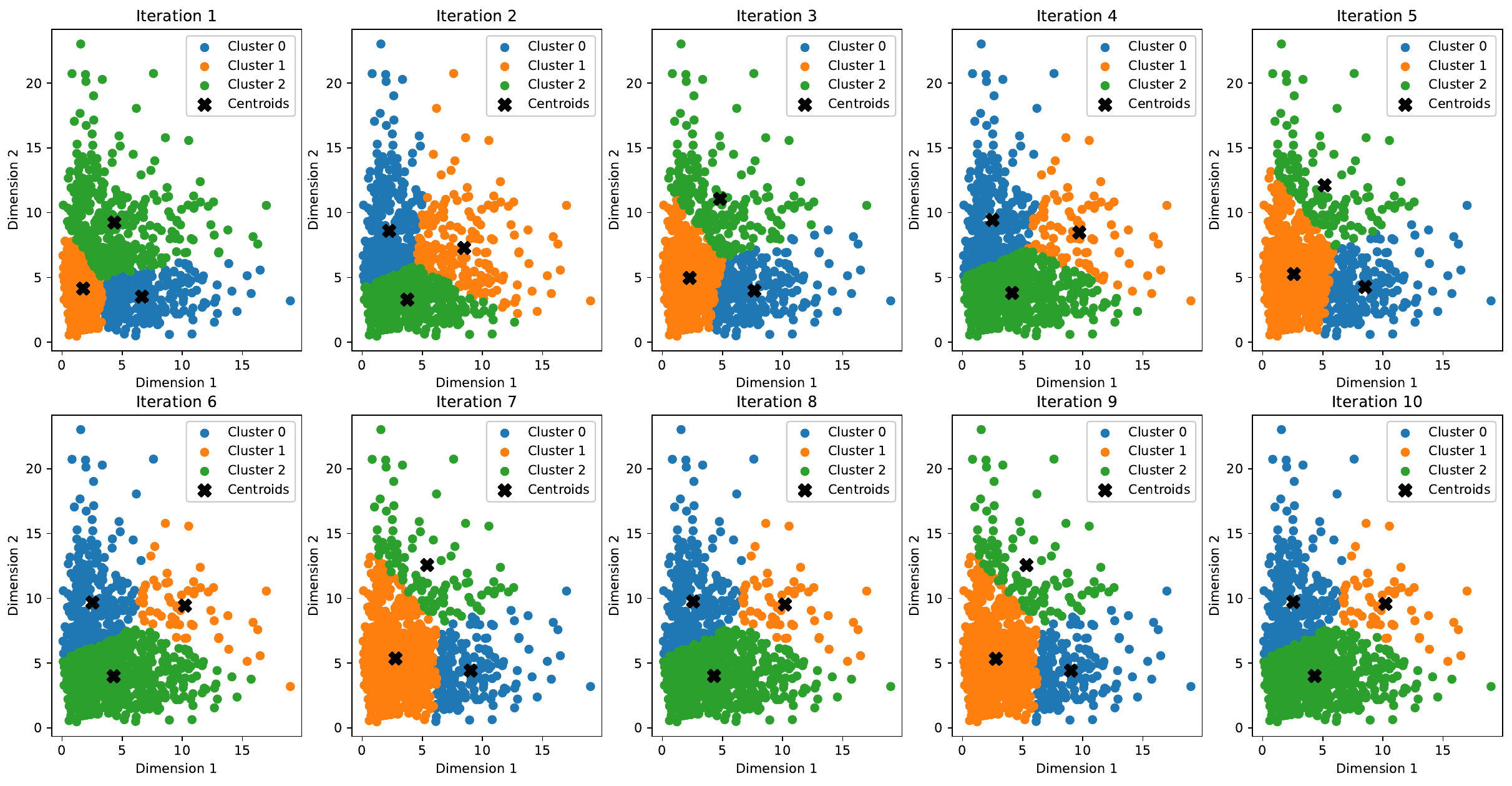}
    \caption{Influence iteration of $t$ on centroid positions in Gamma dataset}
    \label{fig:3}
\end{figure}

This visualization of 10 iterations demonstrates how this adaptation influences the clustering process, particularly the positioning of centroids and the formation of clusters in a two-dimensional space.

In the initial iterations (1-3), the centroids (marked by black X's) exhibit significant movements, rapidly aligning themselves with regions of higher point density. This reflects the algorithm's effort to minimize the normalized distance metric, $\Gamma_{p-q}(t)$, which amplifies the influence of nearby points while reducing the effect of outliers or distant points. Between iterations 4 and 7, the algorithm transitions into a phase of progressive stabilization, where centroid movements become less pronounced, and the clusters start forming clear, well-separated structures.

The oscillatory behavior observed in some centroids suggests the presence of multiple local optima caused by the sensitivity parameter t. This parameter effectively adjusts the balance between local density patterns and global cluster distribution. In later iterations (8-10), the centroids stabilize, and the clusters become compact and well-defined, indicating that the algorithm has reached a near-optimal solution.

$\Gamma_{p-q}(t)$ plays a critical role in shaping the clustering outcome. Compared to standard K-means, it attenuates the influence of distant points, making the centroids more responsive to local density variations. This behavior is particularly beneficial for datasets with irregular densities or outliers, as it prevents extreme points from disproportionately affecting the cluster configuration.

In conclusion, this RNKM approach demonstrates improved adaptability and robustness in handling real-world data with varied densities and distributions. The sensitivity parameter tt enables efficient early optimization and stabilizes cluster formation, reducing sensitivity to outliers. Future work could quantitatively compare this method to classical K-means to further validate the benefits of $\Gamma_{p-q}(t)$.

\vskip 1mm

\section{Conclusion}

The RNKM emerges as a promising solution for data clustering, exhibiting exceptional stability across diverse environments. This advancement heralds new possibilities for its application in real-world scenarios, where algorithmic stability amidst data complexity is paramount for ensuring reliable and meaningful results. Our study's findings underscore the capability of the RNKM in generating high-quality clustering outcomes, particularly in scenarios involving non-linear data, outliers and clusters of unequal sizes. This robustness stems from the utilization of a probabilistic distance function, which effectively captures the data distribution and perturbations, surpassing the limitations of the traditional Euclidean distance utilized in standard K-Means algorithms. The study highlights the random normed K-Means as a potent and versatile clustering algorithm with broad applicability across diverse domains. Nonetheless, further research is warranted to comprehensively assess its effectiveness in real-world applications. Future investigations should focus on enhancing the algorithm's performance by optimizing computation time and result accuracy, as well as extending its applicability to large-scale random datasets. These efforts will aid in refining the RNKM algorithm further and fostering its wider adoption in practical applications.

\bibliographystyle{plain}
\bibliography{bib}
\centering
\end{document}